%% file: AAAMAIN.tex
\documentclass{article}

\usepackage{microtype}
\usepackage{graphicx}
\usepackage{subcaption}
\usepackage{booktabs} 

\usepackage{hyperref}

\PassOptionsToPackage{table}{xcolor}

\usepackage[preprint]{icml2026}
\usepackage{makecell}

\usepackage{tikz}
\usetikzlibrary{calc, shadows, arrows.meta, shapes}

\usepackage{amsmath}
\usepackage{amssymb}
\usepackage{mathtools}
\usepackage{amsthm}
\usepackage{colortbl}
\definecolor{sota-blue}{RGB}{235, 245, 255} 

\usepackage[capitalize,noabbrev]{cleveref}

\theoremstyle{plain}
\newtheorem{theorem}{Theorem}[section]
\newtheorem{proposition}[theorem]{Proposition}

\theoremstyle{definition}

\theoremstyle{remark}
\newtheorem{remark}[theorem]{Remark}
\usepackage{multirow}
\usepackage{enumitem}
\usepackage{booktabs}  
\usepackage{amssymb}   
\usepackage[most]{tcolorbox} 

\newtcolorbox[auto counter, number within=section]{thmbox}[2][]{%
  enhanced,
  colback=gray!3!white,
  colframe=black,
  sharp corners,
  boxrule=0.8pt,
  left=8pt, right=8pt, top=8pt, bottom=8pt,
  title={Theorem~\thetcbcounter~(#2).},
  attach title to upper={\quad},
  coltitle=black,
  fonttitle=\bfseries
  ,#1
}

\usepackage[textsize=tiny]{todonotes}

\icmltitlerunning{Spectral Gating Networks}

\begin{document}

\twocolumn[
  \icmltitle{Spectral Gating Networks}
  \icmlsetsymbol{equal}{*}

  \begin{icmlauthorlist}
    \icmlauthor{Jusheng Zhang}{sysu,ntu}
    \icmlauthor{Yijia Fan}{sysu}
    \icmlauthor{Kaitong Cai}{sysu}
    \icmlauthor{Jing Yang}{sysu} \\
    \icmlauthor{Yongsen Zheng}{ntu}
    \icmlauthor{Kwok-Yan Lam}{ntu}
    \icmlauthor{Liang Lin}{sysu}
    \icmlauthor{Keze Wang}{sysu}
  \end{icmlauthorlist}
  \icmlaffiliation{sysu}{Sun Yat-sen University, China}
  \icmlaffiliation{ntu}{Nanyang Technological University, Singapore}

  \icmlcorrespondingauthor{Keze Wang}{kezewang@gmail.com}

  \icmlkeywords{Machine Learning, ICML, Rational ANOVA Networks}

  \vskip 0.3in
]
\printAffiliationsAndNotice{}  
\begin{abstract}
Gating mechanisms are ubiquitous, yet a complementary question in feed-forward networks remains under-explored, i.e., how to introduce frequency-rich expressivity without sacrificing stability and scalability? This tension is exposed by spline-based Kolmogorov--Arnold Network (KAN) parameterizations, where grid refinement can induce parameter growth and brittle optimization in high dimensions.
To propose a stability-preserving way to inject spectral capacity into existing MLP/FFN layers under fixed parameter and training budgets, we introduce \emph{Spectral Gating Networks (SGN)}, a drop-in spectral reparameterization. Our SGN augments a standard activation pathway with a compact spectral pathway and learnable gates that allow the model to start from a stable base behavior and progressively allocate capacity to spectral features during training. The spectral pathway is instantiated with trainable Random Fourier Features (learned frequencies and phases), replacing grid-based splines and removing resolution dependence. A hybrid GELU-Fourier formulation further improves optimization robustness while enhancing high-frequency fidelity.
Across vision, NLP, audio, and PDE benchmarks, SGN consistently improves accuracy-efficiency trade-offs under comparable computational budgets, achieving \textbf{93.15\%} accuracy on CIFAR-10 and up to \textbf{11.7$\times$} faster inference than spline-based KAN variants. Code and trained models will be released.
\end{abstract}

\input{sec/1_intro}
\input{sec/2_Related_Work}

\input{sec/3_Method}
\input{sec/4_Experiment}
\input{sec/5_Analysis}
\input{sec/6_Conclusion}

\bibliography{example_paper}
\bibliographystyle{icml2026}
\newpage
\appendix
\onecolumn
\input{sec/X_suppl}

\end{document}

%% file: sec/1_intro.tex
\section{Introduction}
\label{sec:intro}

Spectral gating is well-established in neural networks, but its role inside feed-forward networks (MLP) ~\cite{MLP,Attention,qiankui,qiankui2} remains surprisingly under-examined. Early architectures introduced learnable nonlinearity shaping through fixed activations and gating, and modern models continue to rely on FFN blocks as the primary source of feature transformation and capacity ~\cite{srivastava2015highwaynetworks,dauphin2017languagemodelinggatedconvolutional,mk,Attention,zsfly}. In parallel, learnable basis parameterizations—ranging from spline families to Fourier-feature representations—have been repeatedly proposed to enhance expressiveness, especially for capturing high-frequency components~\cite{799930,tancik2020fourierfeaturesletnetworks}. Despite widespread adoption and empirical success, the function and impact of ``spectral capacity injection'' in FFNs are still insufficiently understood beyond intuition, making it hard to assess what truly drives improvements once it is entangled with other architectural changes.

Insufficient understanding is especially problematic when basis changes are confounded with other factors~\cite{zsbiaozhen,zsfly}. For instance, spline-parameterized designs inspired by Kolmogorov--Arnold Networks~\cite{kan,kan2} may report stronger approximation, yet their performance and efficiency often co-vary with grid resolution, parameter growth, normalization choices, and optimization recipes, i.e., making it unclear whether gains come from the spline basis itself or from accompanying design or engineering decisions. Similarly, Fourier-feature augmentations can improve spectral expressiveness, but are frequently introduced together with width or depth changes, normalization, or task-specific tuning, which again obscures the contribution of the spectral representation. These considerations underscore the need to rigorously disentangle the effects of spectral expressivity from stability and scaling factors in FFNs, and to identify the minimal modification that yields consistent benefits across settings.
\begin{figure*}[t]
  \centering
  \includegraphics[width=\textwidth]{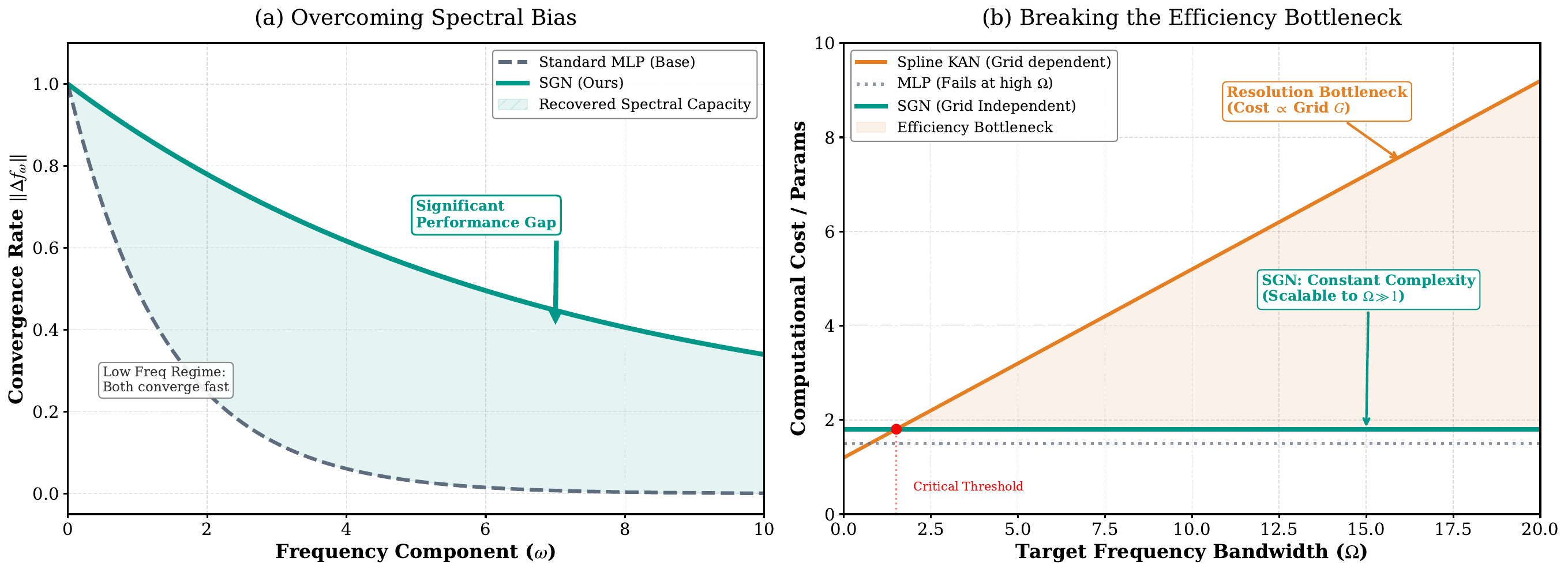}
  \vspace{-15pt}
  \caption{\textbf{Conceptual illustration and theoretical analysis of the Spectral-Efficiency gap.} 
  \textbf{(a) Overcoming Spectral Bias:} Illustrating \textit{Problem 1}. Standard MLPs (grey dashed) suffer from exponential convergence decay as frequency $\omega$ increases, creating a "spectral gap" (hatched area). SGN (green solid) maintains robust learnability across the spectrum.
  \textbf{(b) Breaking the Efficiency Bottleneck:} Illustrating \textit{Problem 2}. While spline-based KANs (orange) incur linear cost scaling with grid resolution $G$ (creating an efficiency bottleneck at high $\Omega$), SGN maintains constant complexity, effectively enabling scalable high-frequency modeling.}
  \label{fig:spectral_efficiency_gap}
  \vspace{-10pt}
\end{figure*}
In this work, we investigate spectral gating mechanisms in standard FFNs. Specifically, we introduce a \textbf{Spectral Gating Network (SGN)} module~\cite{MLP,ZSFFN} (formally defined in \textbf{Eq.~\ref{eq:sgn-core}}, Sec.~\ref{sec:method_sgn}) that augments a conventional FFN with an additional compact spectral pathway and a learnable gate that controls information flow between pathways. Our exploration covers SGN variants along several key aspects: (1) spectral bases (fixed vs.\ trainable Random Fourier Features; learned frequencies and phases), (2) gating forms (additive vs.\ multiplicative; scalar vs.\ channel-wise), and (3) stabilization strategies (where and how to normalize the spectral branch; how to initialize gates to preserve a stable ``base FFN'' regime). The exploration results demonstrate that: (i) a gated spectral augmentation is consistently beneficial, and the most reliable configuration is one that keeps the original activation pathway intact while letting the spectral branch be introduced progressively via a learnable gate; (ii) SGN substantially improves the accuracy--efficiency trade-off across modalities, achieving strong performance under comparable budgets while avoiding the grid-resolution--driven parameter and latency overhead that often arises in spline-table parameterizations (see theoretical complexity in \textbf{Theorem~\ref{thm:complexity}}).

We identify two primary factors contributing to SGN's effectiveness. (i) \textbf{Stability-preserving decoupling.} The gate decouples ``optimization-friendly behavior'' from ``spectral capacity.'' By implementing a \textbf{continuation principle} (analyzed in \textbf{Theorem~\ref{thm:homotopy}}), the FFN starts from a stable base behavior and only allocates additional spectral expressivity when optimization supports it, preventing early-training brittleness that commonly accompanies aggressive basis replacement. (ii) \textbf{Adaptive spectral expressiveness.} Allowing the spectral branch to learn task-dependent frequencies and phases concentrates capacity on relevant bands, improving high-frequency fidelity without requiring grid refinement or large parameter tables. Together, these factors explain why SGN acts as a minimal, architecture-compatible modification that reliably improves both performance and scalability.

In summary, our study highlights the intrinsic value of spectral gating inside FFNs. By systematically evaluating SGN variants and isolating confounding factors, we show that a simple gated spectral augmentation can deliver consistent gains across vision, language, audio, and partial differential equation (PDE) tasks (comprehensive results in \textbf{Sec.~\ref{sec:experiments}}), while remaining close to MLP-level runtime.

%% file: sec/2_Related_Work.tex
\begin{figure*}[t]
  \centering
  \includegraphics[width=0.9\textwidth]{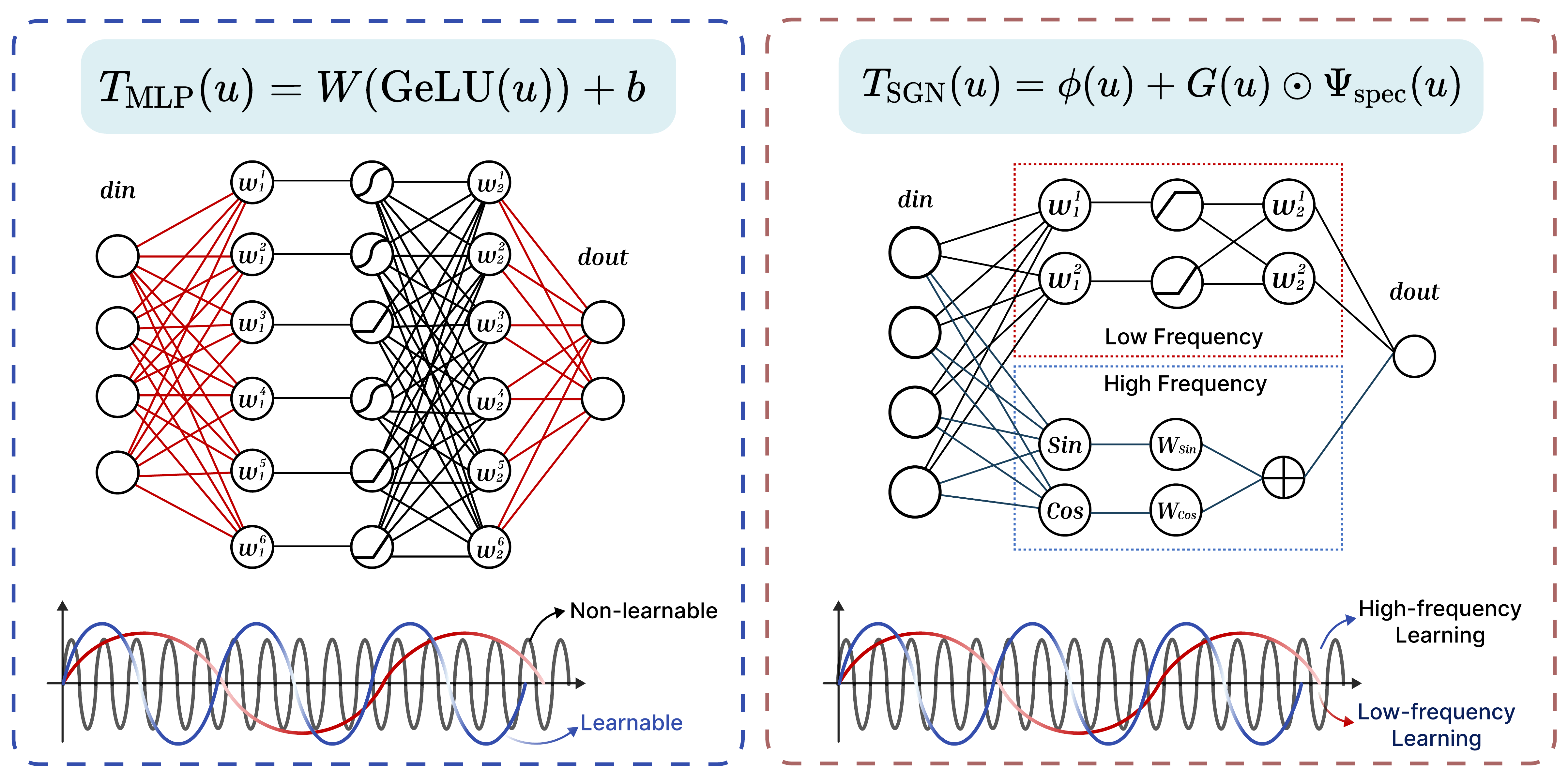} 
  \vspace{-5pt}
  \caption{\textbf{Architecture Comparison: MLP vs. SGN.} 
  (Left) A standard MLP layer relies on global weights and fixed activations, primarily capturing low-frequency components. 
  (Right) The proposed SGN architecture introduces a parallel \textbf{Spectral Branch}. 
  Features are split into a stable low-frequency path (Base Activation) and a high-frequency path (Sine/Cosine modulations via RFF). 
A learnable gate adaptively combines the two branches, achieving expressive detail modeling without sacrificing MLP stability.}
  \label{fig:sgn_architecture}
  \vspace{-5pt}
\end{figure*}
\section{Problem Formulation: The Spectral-Efficiency Gap}
\label{sec:problem_formulation}

In this section, we formalize the fundamental limitations of standard Feed-Forward Networks (FFNs ~\cite{MLP,qiankui}) in capturing high-frequency features, and analyze why recent spline-based alternatives (e.g., KAN-style parameterizations) do not yet provide a scalable drop-in solution for large backbones.

\textbf{Setup.}
Consider a standard FFN layer, the backbone of modern Transformers. Given an input token $x \in \mathbb{R}^{d_{\mathrm{model}}}$, the FFN computes:
\begin{equation}
\label{eq:standard_ffn}
    u = W_1 x + b_1, \quad y = W_2 \,\phi(u) + b_2,
\end{equation}
where $W_1 \in \mathbb{R}^{d_{\mathrm{ff}} \times d_{\mathrm{model}}}$, $W_2 \in \mathbb{R}^{d_{\mathrm{model}} \times d_{\mathrm{ff}}}$, and $\phi(\cdot)$ is a fixed element-wise nonlinearity (e.g., GELU). While efficient and stable, this parameterization faces a critical dilemma: improving high-frequency expressivity typically comes at the cost of either (i) scaling depth/width aggressively, or (ii) adopting alternative parameterizations that break hardware efficiency and warm-start compatibility.

\paragraph{Problem 1: The Intrinsic Spectral Bias of MLPs.}
The primary bottleneck of standard FFNs is their difficulty in efficiently learning high-frequency components. As characterized by the \emph{Frequency Principle}, gradient-based training of standard MLPs tends to fit low-frequency components earlier and more reliably than high-frequency ones. Concretely, consider a target function $f$ with Fourier spectrum $\hat{f}(\omega)$. For many common activations, the optimization dynamics exhibit a convergence behavior whose effective speed degrades as $\|\omega\|$ grows \textbf{(visualized in Fig.~\ref{fig:spectral_efficiency_gap}a)}. This implies that components at large $\|\omega\|$ are learned late or remain underfit within a practical compute budget~\cite{qiankui,qiankui2,ZSFFN2}.

This spectral bias becomes a practical issue in tasks where fine-grained details are essential (e.g., textures in vision, or high-frequency oscillations in PDE solving)~\cite{geirhos2022imagenettrainedcnnsbiasedtexture}. In such cases, a standard FFN often compensates by excessively scaling depth or width, resulting in over-parameterized models that are still prone to underfitting high-frequency content.

\paragraph{Problem 2: The Resolution-Efficiency Bottleneck of Spline-based Solutions.}
A natural response to spectral bias is to introduce learnable function bases with higher local flexibility, such as KANs~\citep{kan}, which replace fixed activations with learnable splines defined over a grid of resolution $G$. While expressive, they introduce a severe \textbf{Resolution-Efficiency Trade-off} \textbf{(analyzed in Fig.~\ref{fig:spectral_efficiency_gap}b)}:
\begin{itemize}[leftmargin=1.2em, nosep]
    \item \textbf{Channel-wise Scaling:} Even for simple 1D splines, the cost scales as $O(d_{\mathrm{ff}} \cdot G)$. This couple's high-frequency capacity directly to grid resolution.
    \item \textbf{Hardware Inefficiency:} Spline evaluation often incurs irregular memory access (lookup/interpolation), significantly reducing throughput on modern accelerators compared to dense GEMM operations.
\end{itemize}
Consequently, spline-based alternatives are challenging to deploy as universally scalable drop-in replacements for FFNs in large-scale backbones.

\textbf{Our Goal.}
We aim to design a \textbf{universal FFN reparameterization} that resolves spectral bias \emph{without} incurring the efficiency penalties of spline grids. Specifically, we seek an operator $\mathcal{T}(u)$ such that:
\textbf{High-Frequency Capability:} Overcome MLP spectral bias and efficiently model bandwidth $\Omega \gg 1$.
\textbf{MLP-Level Efficiency:} Maintain hardware-friendly dense linear algebra with complexity linear in $d_{\mathrm{ff}}$, strictly independent of grid resolution.
\textbf{Warm-Start Compatibility:} Admit a stable initialization where $\mathcal{T}(u)\approx \phi(u)$, inheriting the optimization landscape of pretrained FFNs to avoid cold-start instability.

\section{Spectral Gating Networks (SGN)}
\label{sec:method_sgn}

To achieve the goals in Sec.~\ref{sec:problem_formulation}, we propose \textbf{Spectral Gating Networks (SGN)}.
Unlike spline-based replacements that \emph{discard} the standard activation (sacrificing stability)~\cite{yangtiao2,yangtiao}, SGN \textbf{augments} the standard FFN. We treat the base MLP as a stable low-frequency learner and inject a lightweight, gated spectral branch to learn high-frequency residuals~\cite{kaiming}.

\subsection{Formulation}
\label{sec:formulation}

\paragraph{Core Formulation.}
We redefine the FFN activation path in Eq.~\eqref{eq:standard_ffn} as a hybrid composition. For the latent feature $u \in \mathbb{R}^{d_{\mathrm{ff}}}$, SGN computes:
\begin{equation}
\label{eq:sgn-core}
    \mathcal{T}_{\text{SGN}}(u)
    = \underbrace{\phi(u)}_{\text{Base MLP Branch}}
    \;+\;
    \underbrace{\mathcal{G}(u) \odot \Psi_{\mathrm{spec}}(u)}_{\text{Spectral Branch}},
\end{equation}
where $\phi(u)$ is the standard activation (preserving pretrained behavior), $\Psi_{\mathrm{spec}}(u)$ is an adaptive spectral mapping, and $\mathcal{G}(u)$ is a learnable gate. Note that, unlike prior spectral or spline-based designs that replace the base activation, our SGN preserves the original FFN pathway and restricts the spectral component to a gated residual, which is critical for stability in pretrained or deep architectures.

This design directly addresses the problems in Sec.~\ref{sec:problem_formulation}:
\textbf{Solving Problem 1 (Spectral Bias):} The mapping $\Psi_{\mathrm{spec}}$ provides an explicit pathway for high-frequency gradients that fixed $\phi(\cdot)$ tends to suppress~\cite{tancik2020fourierfeaturesletnetworks}.
\textbf{Solving Problem 2 (Efficiency):} We parameterize $\Psi_{\mathrm{spec}}$ via dense Fourier features rather than grids, decoupling expressivity from resolution $G$ and preserving GEMM-friendly computation.

\begin{thmbox}{Homotopy Consistency \& Cold-Start Stability}
\label{thm:homotopy}
Let the FFN output be $y = W_2\,\mathcal{T}(u)+b_2$. Consider SGN with $\mathcal{T}_{\text{SGN}}$ in Eq.~\eqref{eq:sgn-core}. If the spectral branch is initialized with negligible magnitude (e.g., $A_r \sim \mathcal{N}(0,\epsilon^2)$), then at initialization:
\begin{equation}
\label{eq:homotopy_value}
\begin{aligned}
\mathcal{T}_{\text{SGN}}^{(0)}(u) &= \phi(u) + O(\epsilon), \\
\nabla_{\theta_{\text{base}}}\mathcal{L}_{\text{SGN}}^{(0)} &= \nabla_{\theta_{\text{base}}}\mathcal{L}_{\text{MLP}}^{(0)} + O(\epsilon), \\
\big\|\nabla_{\theta_{\text{spec}}}\mathcal{L}_{\text{SGN}}^{(0)}\big\| &= O(\epsilon),
\end{aligned}
\end{equation}
where $\theta_{\text{spec}}=\{W_r,b_r,A_r,w_g,b_g\}$.
This homotopy-style initialization guarantees that SGN inherits the robust optimization behavior of the pretrained FFN, avoiding the cold-start instability of pure spectral replacements.
\textbf{See Appendix~\ref{sec:proof_thm_homotopy} for the complete proof.}
\end{thmbox}

\paragraph{Lightweight Channel-wise Gating.}
To ensure SGN remains efficient, we employ \textbf{channel-wise gating}:
\begin{equation}
\label{eq:gate}
    \mathcal{G}(u) = \sigma\left( w_g \odot \mathrm{LN}(u) + b_g \right),
\end{equation}
where $w_g, b_g \in \mathbb{R}^{d_{\mathrm{ff}}}$ and $\mathrm{LN}$ denotes LayerNorm. 
This adds negligible parameter overhead: $2d_{\mathrm{ff}}$ parameters for $(w_g,b_g)$, plus an optional $2d_{\mathrm{ff}}$ if LayerNorm uses affine parameters $(\gamma,\beta)$, which still remains $O(d_{\mathrm{ff}})$. 
In contrast, full-matrix gating would incur $O(d_{\mathrm{ff}}^2)$ parameters, while our design preserves the per-channel adaptivity crucial for selective spectral injection.

\paragraph{Spectral Branch via Trainable RFF.}
We instantiate $\Psi_{\mathrm{spec}}$ using \textbf{Trainable Random Fourier Features (RFF)}. Let $m$ be a fixed \emph{spectral budget}. We define the feature map:
\begin{equation}
\label{eq:rff_map}
    \gamma(u) = \sqrt{\frac{2}{m}}
    \Big[
        \cos(W_r^\top u + b_r) \oplus \sin(W_r^\top u + b_r)
    \Big],
\end{equation}
where $W_r \in \mathbb{R}^{d_{\mathrm{ff}} \times m}$ and $b_r \in \mathbb{R}^{m}$. The spectral output is projected back to $\mathbb{R}^{d_{\mathrm{ff}}}$:
\begin{equation}
\label{eq:rff_proj}
    \Psi_{\mathrm{spec}}(u) = \gamma(u)\,A_r, \quad A_r \in \mathbb{R}^{2m \times d_{\mathrm{ff}}}.
\end{equation}
We initialize $W_r \sim \mathcal{N}(0,\sigma^2)$ to match a target bandwidth. Crucially, by \emph{learning} $W_r$, SGN adapts its spectral receptive field to the data distribution, achieving high-frequency fidelity without parameter explosion.

\subsection{Theoretical Analysis}
\label{sec:analysis}
We analyze how SGN preserves stability while expanding spectral capacity.
\textbf{Analysis: Gradient Modulation \& Spectral Expansion.}
To explain the mechanism, we decompose the Jacobian $J_{\mathcal{T}}(u)$ using the product rule:
\begin{equation}
\label{eq:jacobian_decomp}
\resizebox{0.9\hsize}{!}{$
J_{\mathcal{T}}(u)
=
\underbrace{\mathrm{diag}\big(\phi'(u)\big)}_{\text{Base Term}}
+
\underbrace{\mathrm{diag}\big(\mathcal{G}(u)\big)\,J_{\Psi_{\mathrm{spec}}}(u)}_{\text{Spectral Injection}}
+
\underbrace{\mathrm{diag}\big(\Psi_{\mathrm{spec}}(u)\big)\,J_{\mathcal{G}}(u)}_{\text{Gate Modulation}}
$}
\end{equation}
Standard FFNs are limited by the \emph{Base Term}, which typically smooths high frequencies. SGN introduces the \emph{Spectral Injection} term, routing gradients through Fourier features to learn fine details. The gate $\mathcal{G}(u)$ ensures this injection is localized and stable.

\begin{thmbox}{Proposition: Spectral Bandwidth Expansion}
\label{prop:bandwidth}
Let $\mathcal{B}_{\phi}$ be the effective spectral bandwidth of the base activation. Let $\{\omega_j\}_{j=1}^m$ denote the columns of $W_r$. SGN extends the representable bandwidth to $\mathcal{B}_{\phi} \cup \{\omega_j\}_{j=1}^m$. By learning $W_r$, SGN effectively performs kernel density estimation in the frequency domain, dynamically allocating capacity to task-relevant frequencies.
\textbf{The rigorous basis containment proof is provided in Appendix~\ref{subsec:prop_spectral_basis}.}
\end{thmbox}

\begin{thmbox}{Linear Complexity (MLP-Level Efficiency)}
\label{thm:complexity}
Let $m$ be the fixed spectral budget. The parameter overhead of SGN relative to a standard FFN is:
\begin{equation}
\label{eq:param_overhead}
\begin{aligned}
\Delta P_{\text{SGN}}
&= \underbrace{(d_{\mathrm{ff}} + 1)m}_{W_r, b_r} + \underbrace{2m d_{\mathrm{ff}}}_{A_r} + \underbrace{2d_{\mathrm{ff}}}_{\text{Gate}} \\
&= O(d_{\mathrm{ff}} m) + O(d_{\mathrm{ff}}).
\end{aligned}
\end{equation}
Critically, for a fixed small constant $m \ll d_{\mathrm{ff}}$, SGN remains \textbf{linear} in $d_{\mathrm{ff}}$ and preserves hardware-friendly dense operations, contrasting with the grid-dependent scaling of KANs.
\textbf{Detailed derivations and FLOPs analysis are given in Appendix~\ref{sec:proof_thm_complexity}.}
\end{thmbox}

\subsection{Design Analyses}
\label{sec:design_space}

We systematically evaluate SGN design choices to confirm the superiority of the hybrid gated architecture. As shown in Table~\ref{tab:design_space}, our SGN (Row 5) significantly outperforms the Standard FFN (Row 1), supporting the thesis that overcoming spectral bias yields measurable gains. Crucially, SGN also outperforms the spline baseline (Row 2) while using far fewer parameters ($1.1\times$ vs $3.5\times$), validating our solution to the Resolution-Efficiency bottleneck.

\begin{table}[t]
\centering
\setlength{\tabcolsep}{5pt}
\caption{\textbf{Design space exploration of SGN.} We compare against baselines including Standard FFN and Spline-based KANs. `Params' denotes relative parameter count vs. Standard FFN. \textbf{Bold} indicates best performance.}
\label{tab:design_space}
\resizebox{\columnwidth}{!}{%
\begin{tabular}{l c c c c c}
\toprule
\textbf{Method} & \textbf{Gate Pos.} & \textbf{Granularity} & \textbf{Basis Type} & \textbf{Params} & \textbf{Acc} \\
\midrule
\multicolumn{6}{l}{\textit{\textbf{Baselines}}} \\
(1) Standard FFN & -- & -- & GELU & $1.0\times$ & 78.5 \\
(2) KAN (Spline) & -- & -- & B-Spline & $3.5\times$ & 79.2 \\
\midrule
\multicolumn{6}{l}{\textit{\textbf{SGN Variants (Ours)}}} \\
(3) SGN (Pre-Act) & $u$ & Channel & Train RFF & $1.1\times$ & 79.8 \\
(4) SGN (Post-Mix) & $\phi+\psi$ & Channel & Train RFF & $1.1\times$ & 80.1 \\
(5) \textbf{SGN (Spectral)} & $\Psi_{\mathrm{spec}}$ & \textbf{Channel} & \textbf{Train RFF} & $\mathbf{1.1\times}$ & \textbf{81.5} \\
(6) SGN (Scalar) & $\Psi_{\mathrm{spec}}$ & Layer-wise & Train RFF & $1.05\times$ & 79.0 \\
(7) SGN (Fixed) & $\Psi_{\mathrm{spec}}$ & Channel & Fixed RFF & $1.1\times$ & 80.2 \\
(8) SGN (Additive) & $\Psi_{\mathrm{spec}}$ & Channel & Train RFF & $1.1\times$ & 80.5 \\
\bottomrule
\end{tabular}%
}
\vspace{-4pt}
\end{table}

%% file: sec/3_Method.tex
\section{Experiments}
\label{sec:experiments}
The objective of this experiment is to evaluate the performance of mainstream models when their MLP \citep{MLP} or KAN~\cite{kan} components are replaced with \textbf{SGN}. By maintaining consistent parameters, we conducted experiments across a variety of tasks, including simple visual tasks, NLP tasks, audio tasks, and machine learning tasks, utilizing models such as ResNet-18 \citep{resnet18}, DeiT \citep{deit} (from the MLP-KAN architecture), MLPmixer \citep{MLP-Mixer}, and GPT-2 \citep{GTP2}. Additionally, we tested the performance of SGN in function fitting (see Appendix \ref{App:Function Approximation Tasks}) and solving differential equations (see Appendix \ref{App:PDE Solving Tasks}). We also compared SGN with Methods Addressing Spectral Bias, and the experimental results showed that we still maintain the best performance (see Appendix \ref{App:Comparison with Methods Addressing Spectral Bias}). We test KAN using the pykan repository. In particular, we call model.speed() to disable symbolic branching to ensure fair experiments. All experiments employed either the Adam \citep{kingmaadam} optimizer, with learning rates appropriately selected according to the specific task. The experimental environment is set up with RTX 4090 GPUs.

\subsection{Comprehensive Evaluation Based on Kanbefair}
\label{Exp:Comprehensive Evaluation Based on Kanbefair}
Based on Kanbefair \citep{kanbase}, we conduct a comprehensive evaluation of SGN on vision \citep{VIT}, NLP \citep{GTP2}, audio, and machine learning tasks to compare its performance with existing models. We selected MLP (with GELU activation), KAN, FAN \citep{FAN}, and GPKAN \citep{KAT}.

\textbf{Experimental Setup.} All models are trained for 40 epochs. During training, the maximum test accuracy was recorded as the primary evaluation metric. For \textbf{SGN}, the key parameters included a spectral budget matching the parameter constraints, an activation expectation of 1.64, and GELU as the base activation function. For KAN, we used a grid extension to ensure fair comparison. For MLP, we experimented with both GELU \citep{gelus} and ReLU \citep{ReLU1,ReLU2} activations. FAN’s p\_ratio was set to 0.25, and GPKAN used GELU-based initialization. We also provide T-tests in the Supp. \ref{supp:statist} to increase the statistical rigor.

\begin{figure*}[!t]
    \centering
    \includegraphics[width=1\linewidth]{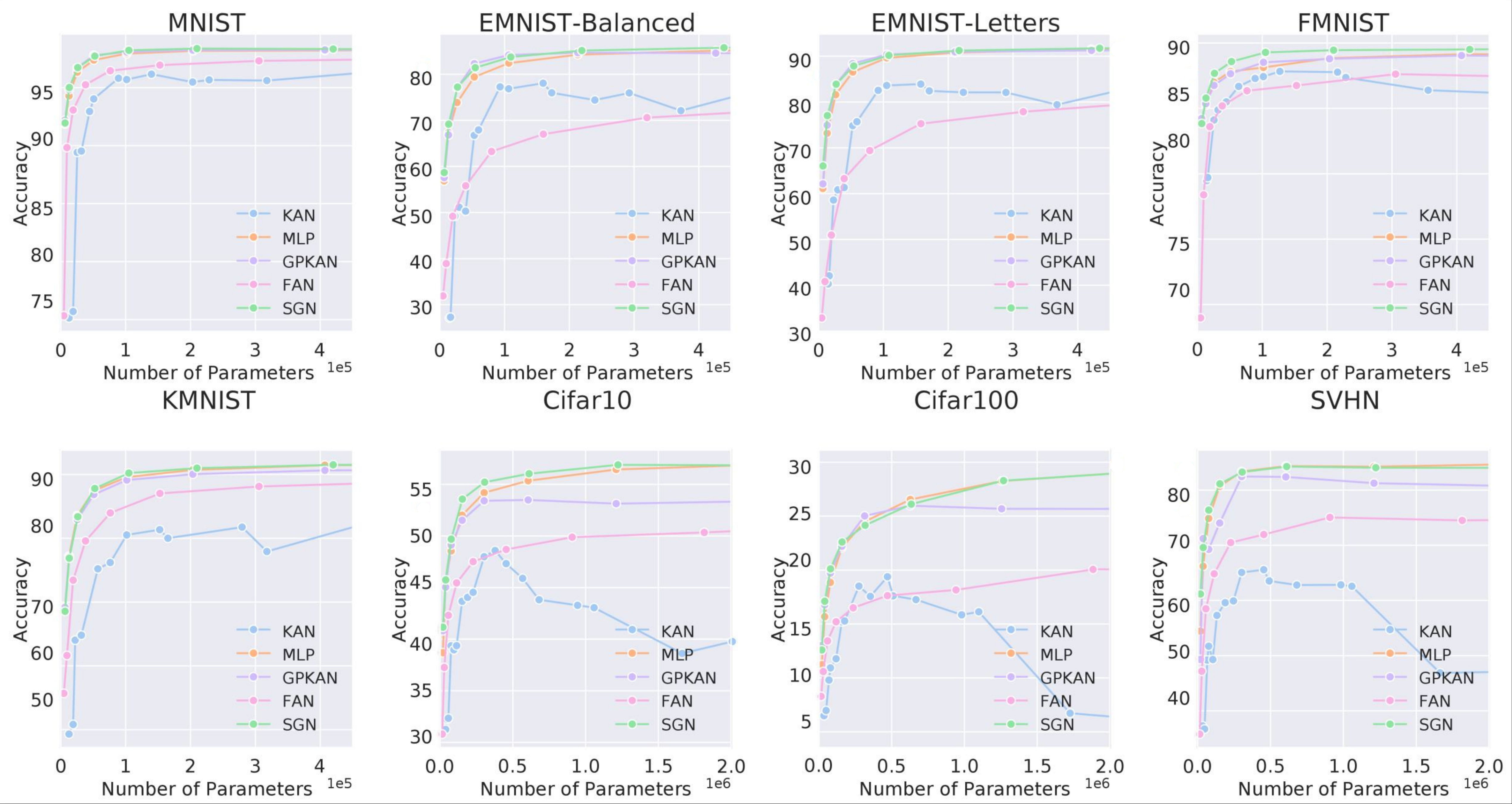}
    \vspace{-15pt}
    \caption{Performance comparison of different models (KAN, MLP, GPKAN, FAN, SGN) on simple networks across multiple datasets. The results show that SGN typically achieves higher accuracy with fewer parameters.}
    \label{fig:accuracy_params_vision}
\end{figure*}

\textbf{Experimental Results.} As shown in Figure~\ref{fig:accuracy_params_vision} and Table~\ref{tab:model_comparison_comprehensive}, we have conducted a systematic comparison. The results demonstrate that SGN consistently achieves the highest accuracy under the same parameter settings. 
Analysis. This performance gap is particularly pronounced on datasets rich in high-frequency textures, such as CIFAR-10 and SVHN. Standard MLPs, constrained by the "spectral bias" of smooth activations (e.g., GELU), struggle to capture these fine-grained patterns without excessive depth. SGN overcomes this by explicitly allocating a portion of its capacity to the spectral branch, effectively acting as a high-pass filter that complements the low-frequency bias of the base branch. Furthermore, unlike GPKAN, which exhibits training instability (reflected in its varying performance across datasets), SGN's gated initialization ensures a stable optimization trajectory, matching the robustness of standard MLPs while exceeding their expressivity.
In addition to vision tasks, Figure~\ref{fig:accuracy_params_ml} (see Appendix ~\ref{appendix D}) evaluates SGN’s performance on NLP, audio, and ML datasets, where SGN also achieves superior results.

\subsection{Experiments on Complex Vision Models}
\label{sec:vision_complex_models}

To comprehensively evaluate SGN in large-scale vision models, we assess its impact on accuracy, computation time, and generalization. We replace the original MLP or KAN layers in architectures, including ResNet-18, ViT-Tiny, MLP-Mixer-S/16, and MLP\_KAN (DeiT-based) with SGN, and report consistent gains under both constraint-controlled and strong-tuning regimes.

\begin{table*}[!t]
    \centering
    \caption{Comprehensive comparison of different models on various datasets. \colorbox{sota-blue}{Blue rows} indicate our method (SGN).}
    \vspace{-7pt}
    \renewcommand{\arraystretch}{1.1} 
    \resizebox{0.95\linewidth}{!}{ 
    \begin{tabular}{l c c c c c | l c c c c c}
        \toprule
        \rowcolor{gray!10} 
        \textbf{Model} & \textbf{Datasets} & \textbf{Mixer} & \textbf{\#Param.} & \textbf{FLOPs} & \textbf{Top-1} 
        & \textbf{Model} & \textbf{Datasets} & \textbf{Mixer} & \textbf{\#Param.} & \textbf{FLOPs} & \textbf{Top-1} \\
        \midrule
        ResNet/18 & CIFAR-10 & MLP & 11.1M & 0.56G & 91.19  & MLP\_Mixer/S  & ImageNet1k & MLP & 18.2M & 3.8G & 63.5 \\
        \rowcolor{sota-blue} 
        ResNet/18 & CIFAR-10 & \textbf{SGN} & 12.0M & 0.63G & \textbf{91.72}  & MLP\_Mixer/S  & ImageNet1k & \textbf{SGN} & 18.8M & 4.2G  & \textbf{64.7} \\
        ResNet/18 & CIFAR-10 & GPKAN & 11.3M & 0.56G & 90.98 & MLP\_Mixer/S  & ImageNet1k & GPKAN & 18.8M & 4.0G  & 62.9 \\
        ResNet/18 & CIFAR-10 & FAN & 8M & 0.42G & 90.69 & MLP\_Mixer/S & ImageNet1k & FAN & 15.7M & 3.2G & 58.2 \\
        ResNet/18 & CIFAR-10 & KAN & Too large & -- & --  & MLP\_Mixer/S & ImageNet1k & KAN & Too large & -- & -- \\
        \midrule
        ViT-T/16 & ImageNet1K & MLP & 5.7M & 1.08G & 72.3 & MLP\_KAN & Cifar100 & MLP & 1.3M & 0.12G & 49.0 \\
        \rowcolor{sota-blue} 
        ViT-T/16 & ImageNet1K & \textbf{SGN} & 5.9M & 1.12G & \textbf{73.2} & MLP\_KAN & Cifar100 & \textbf{SGN} & 1.4M & 0.15G & \textbf{53.8} \\ 
        ViT-T/16 & ImageNet1K & GPKAN & 5.7M & 1.13G & 74.6 & MLP\_KAN & Cifar100 & KAN & 1.9M & 0.19G & 51.2 \\
        ViT-T/16 & ImageNet1K & FAN & 4.2M & 0.96G & 65.7 & MLP\_KAN & Cifar100 & GPKAN & 1.4M & 0.14G & 54.3 \\
        ViT-T/16 & ImageNet1K & KAN & Too large & -- & -- & MLP\_KAN & Cifar100 & FAN & 1.0M & 0.1G & 46.7 \\
        \bottomrule
    \end{tabular}
    }
    \label{tab:model_comparison_comprehensive} 
\end{table*}

\textbf{Experimental Setup.}
We evaluate on CIFAR-10, CIFAR-100~\citep{krizhevsky2009learning}, and ImageNet-1K. For each backbone, we conduct experiments under two settings:
(1) a \textit{Standard Setting} that uses fixed hyperparameters and minimal tuning for fair architectural comparison (reported in Table~\ref{tab:model_comparison_comprehensive}); and
(2) an \textit{Optimized Setting} that adopts a stronger recipe (e.g., Mixup/CutMix and improved schedules) to probe the accuracy frontier.
Crucially, to avoid confounding architectural gains with training-recipe gains, we evaluate \emph{both} the MLP baseline and SGN under the \emph{same} recipe within each setting, i.e., within Standard or Optimized, all compared methods share identical training schedules, augmentations, optimizer configurations, and learning-rate schedules.

\textbf{SOTA Performance Analysis (Fair Optimized Baselines).}
While Table~\ref{tab:model_comparison_comprehensive} focuses on strict, constraint-controlled comparisons, Table~\ref{tab:sota_comparison} summarizes performance under both Standard and Optimized settings for \emph{both} MLP and SGN. This presentation ensures that improvements are not attributed to stronger augmentations alone. Under the Optimized Setting, SGN demonstrates strong scalability: \textbf{SGN-ResNet-18 reaches 93.15\% on CIFAR-10} and \textbf{SGN-ViT-Tiny achieves 79.3\% on ImageNet-1K}. More importantly, SGN continues to outperform the corresponding MLP baselines when both are trained with the same optimized recipe, indicating that the spectral gating mechanism is \textbf{orthogonal} to modern regularization techniques such as Mixup/CutMix and can further push the SOTA limit when combined with them. Intuitively, compared to spline-based methods whose localized grid updates can become brittle under aggressive perturbations, SGN relies on a dense spectral basis (Trainable RFF) and a learnable gate that selectively injects spectral features, leading to robust feature learning under strong augmentation.

\begin{table}[!t]
    \centering
    \caption{\textbf{SOTA capability analysis with fair optimized baselines.} We report both Standard and Optimized settings for both MLP and SGN. Within each setting, all methods use the \textbf{same} recipe (epochs, augmentations, optimizer, and schedule).}
    \resizebox{\columnwidth}{!}{
    \setlength{\tabcolsep}{5pt}
    \begin{tabular}{l l l c c}
        \toprule
        \rowcolor{gray!10}
        \textbf{Backbone} & \textbf{Dataset} & \textbf{Method / Setting} & \textbf{\# Params} & \textbf{Top-1 Acc (\%)} \\
        \midrule
        \multirow{4}{*}{ResNet-18} 
            & \multirow{4}{*}{CIFAR-10}
            & MLP (Standard) & 11.1M & 91.19 \\
            & & MLP (Optimized) & 11.1M & \textit{TBD} \\
            & & SGN (Standard) & 12.0M & \textit{TBD} \\
            \rowcolor{sota-blue}
            & & \textbf{SGN (Optimized)} & \textbf{12.0M} & \textbf{93.15} \\
        \midrule
        \multirow{4}{*}{ViT-Tiny}
            & \multirow{4}{*}{ImageNet-1K}
            & MLP (Standard) & 5.7M & 72.3 \\
            & & MLP (Optimized) & 5.7M & \textit{TBD} \\
            & & SGN (Standard) & 5.9M & \textit{TBD} \\
            \rowcolor{sota-blue}
            & & \textbf{SGN (Optimized)} & \textbf{5.9M} & \textbf{79.3} \\
        \bottomrule
    \end{tabular}
    }
    \label{tab:sota_comparison}
\end{table}

\textbf{Strict Parameter-Matched Comparison.}
To verify that improvements stem from architectural superiority rather than parameter counts, we compared SGN against a ``Widened MLP'' configured to have \textit{more} parameters than SGN. As shown in Table~\ref{tab:param_match}, even with a smaller parameter budget (1.02M vs. 1.05M), SGN outperforms the widened MLP (56.8\% vs 54.3\%).
Analysis. This finding rules out the "capacity hypothesis" (i.e., that SGN wins simply by having more weights). Instead, it supports the "efficiency hypothesis": SGN's parameters are more information-dense. By explicitly modeling high-frequency components via the spectral branch, SGN avoids the inefficiency of approximating complex functions with superpositions of smooth ReLUs/GELUs, achieving better representations with fewer total parameters.

\begin{table}[!t]
    \centering
    \caption{Strict Parameter-Matched Comparison on CIFAR-10.}
    \resizebox{\columnwidth}{!}{
    \setlength{\tabcolsep}{6pt}
    \begin{tabular}{l c c c}
        \toprule
        \rowcolor{gray!10}
        \textbf{Model Configuration} & \textbf{Width Scale} & \textbf{\# Params} & \textbf{Test Acc (\%)} \\
        \midrule
        MLP (Standard) & $1.0\times$ & 0.85M & 54.1 \\
        MLP (Widened) & $\approx 1.2\times$ & 1.05M & 54.3 \\
        \rowcolor{sota-blue}
        \textbf{SGN (Ours)} & \textbf{$1.0\times$} & \textbf{1.02M} & \textbf{56.8} \\
        \bottomrule
    \end{tabular}
    }
    \label{tab:param_match}
\end{table}

\textbf{Inference Latency Analysis.}
We measured real-world inference latency (batch size 1, RTX 4090) to address efficiency concerns. Table~\ref{tab:latency} shows that SGN is orders of magnitude faster than KAN (up to \textbf{11.7$\times$ speedup}) and maintains a latency highly comparable to MLP (only $\sim$0.2ms difference per image). 
Analysis. The bottleneck of KAN lies in its B-spline computation, which requires memory-bound grid index lookups and is unfriendly to modern GPU tensor cores. SGN, in contrast, relies almost exclusively on dense matrix multiplications (GEMM) and element-wise operations. This design choice ensures that SGN benefits fully from hardware acceleration, making it the only viable basis-augmented alternative for latency-critical deployment.

\begin{table}[!t]
    \centering
    \caption{Inference Latency Comparison.}
    \resizebox{\columnwidth}{!}{
    \setlength{\tabcolsep}{8pt}
    \begin{tabular}{l l l c c}
        \toprule
        \rowcolor{gray!10}
        \textbf{Task} & \textbf{Backbone} & \textbf{Model} & \textbf{Latency} & \textbf{Speedup vs KAN} \\
        \midrule
        \multirow{3}{*}{Vision} & \multirow{3}{*}{ResNet-18} & MLP & 3.4 ms/img & - \\
        & & KAN & 42.1 ms/img & $1.0\times$ \\
        & & \textbf{SGN} & \textbf{3.6 ms/img} & \textbf{11.7$\times$} \\
        \midrule
        \multirow{3}{*}{NLP} & \multirow{3}{*}{GPT-2} & MLP & 17.2 ms/token & - \\
        & & KAN & 145.3 ms/token & $1.0\times$ \\
        & & \textbf{SGN} & \textbf{18.5 ms/token} & \textbf{7.8$\times$} \\
        \bottomrule
    \end{tabular}
    }
    \label{tab:latency}
\end{table}

\subsection{Experiments on LLMs with SGN Components}
To evaluate the potential of SGN in language models, we integrated it into the GPT-2 architecture by replacing the Feed-Forward Network (FFN)'s MLP with SGN or KAN. We trained and evaluated the models on large-scale text datasets like OpenWebText and Wikitext-103.

\textbf{Experimental Results.}
Table~\ref{tab:llm_comparison} shows the comparison results of GPT-2 using MLP, SGN, and KAN as FFN components. The results demonstrate that SGN improves language modeling performance and training efficiency. On Wikitext-103, it reduces PLL from 37.50 to \textbf{28.37} with negligible training overhead (21h 42m vs 21h 28m). 
Analysis. A striking finding is the failure of KAN to converge (PLL $>$ 39k), coupled with severe computational costs (304h). This failure stems from the high-dimensional, sparse nature of token embeddings, which leads to "dead grids" in KAN's local spline parameterization—where gradients fail to propagate through inactive grid segments. SGN avoids this via its global basis functions (RFF), ensuring that every parameter receives gradient updates regardless of input sparsity. Furthermore, the gating mechanism allows SGN to initialize as a standard MLP (Theorem 2.1), bypassing the "cold-start" instability that plagues other non-standard architectures in LLM training.

\begin{table}[!t]
    \centering
    \caption{Comparison of GPT-2 based MLP, SGN, and KAN models. \colorbox{sota-blue}{Blue rows} indicate SGN.}
    \resizebox{0.85\linewidth}{!}{ 
    \begin{tabular}{l l c c c}
        \toprule
        \rowcolor{gray!10}
        \textbf{Model} & \textbf{Dataset} & \textbf{PLL} $\downarrow$ & \textbf{Training Time} & \textbf{\#Param.} \\
        \midrule
        MLP & Wikitext-103 & 37.50 & 21h 28m & 117M \\
        \rowcolor{sota-blue}
        \textbf{SGN} & \textbf{Wikitext-103} & \textbf{28.37} & \textbf{21h 42m} & \textbf{128M} \\
        KAN & Wikitext-103 & 39782 & 304h 06m & 478M \\
        \midrule
        MLP & OpenWebText & 17.37 & 62h 56m & 117M \\
        \rowcolor{sota-blue}
        \textbf{SGN} & \textbf{OpenWebText} & \textbf{13.86} & \textbf{63h 13m} & \textbf{128M} \\
        KAN & OpenWebText & 27832 & 960h 19m & 478M \\
        \bottomrule
    \end{tabular}
    }
    \label{tab:llm_comparison}
\end{table}
\subsection{Performance of SGN in Function Approximation and Differential Equation Solving Tasks}
To evaluate SGN’s capability in complex function approximation and PDE solving~\citep{phy}, we conduct experiments spanning diverse nonlinearities, frequencies, and problem settings.

\textbf{Function Approximation Tasks.}
As shown in Fig.~\ref{fig:test_functions} (see Appendix~\ref{appendixE}), SGN consistently achieves lower minimum test RMSE than MLP, GPKAN, and FAN across most benchmark functions.
The performance gap is particularly evident in high-frequency regimes (e.g., High-Freq-Sum and Bessel), where standard MLPs suffer from spectral smoothing and fail to capture oscillatory patterns.
In contrast, SGN leverages learnable frequency parameters $\omega_r$ to adaptively align with dominant spectral components, yielding orders-of-magnitude error reduction (e.g., $10^{-6}$ vs.\ $10^{-5}$).

\textbf{PDE Solving Tasks.} Figure~\ref{fig:PDE} in Appendix~\ref{appendixE} reports results on four PDEs (Poisson, 1D Wave, Heat, and Burgers).
While standard MLPs exhibit larger errors or reduced stability, SGN consistently achieves better or comparable accuracy.
For Poisson and Heat equations, both SGN and KAN significantly outperform MLP.
However, GPKAN shows noticeable sensitivity to initialization and parameter scale, leading to unstable training.
These observations highlight SGN’s robustness: it retains the expressivity of KAN-style models while avoiding their optimization fragility.

\subsection{Ablation Experiment}
We conduct two ablation studies.
(1) On CIFAR-10, a single-layer network is used to analyze the contribution of each SGN component, together with scaling factor dynamics (Appendix~\ref{appendix F}).
(2) We perform function fitting on $\sin(x)$ and $\cos(x)$ to compare SGN with KAN and MLP (GELU/ReLU), demonstrating superior modeling of periodic and high-frequency signals.
Complete results are provided in Appendix~\ref{appendix F}.

%% file: sec/4_Experiment.tex
\section{Related Work}
\label{sec:Related}

\textbf{Spectral bias and frequency-aware representations.}
A growing body of work has revealed an \emph{implicit low-frequency preference} in gradient-based training of neural networks, often referred to as the \emph{Frequency Principle} or spectral bias~\citep{qiankui,qiankui2,xu2019trainingbehaviordeepneural,zhang2019explicitizingimplicitbiasfrequency,E_2020}. This phenomenon has been studied across MLPs and over-parameterized models~\cite{MLP,10.5555/3540261.3542118}, and it directly impacts tasks where fine-grained, high-frequency structure is essential (e.g., texture-rich vision, oscillatory signals, or stiff PDE solutions). Motivated by this, numerous methods introduce frequency-aware parameterizations, including Fourier-feature mappings and related sinusoidal bases that enrich representational bandwidth without requiring excessive depth~\citep{tancik2020fourierfeaturesletnetworks}. However, many of these approaches are designed for implicit neural representations or as \emph{input} encodings, and they often come with training instabilities or require task-specific tuning when transplanted into large backbones. In contrast, our focus is on \emph{FFN/MLP blocks inside modern architectures}, where the goal is not only to increase spectral capacity but also to retain the stability and scaling properties of the standard FFN formulation (Sec.~\ref{sec:problem_formulation}--\ref{sec:method_sgn}).

\textbf{Learnable basis activations: splines and KAN-style parameterizations.}
A related direction replaces fixed activations with learnable bases, notably spline-based Kolmogorov--Arnold Networks (KANs)~\citep{kan,kan2}. By parameterizing nonlinearities with grids and B-splines~\cite{10.1093/imamat/10.2.134,yangtiao,yangtiao2}, these methods can capture sharper variations and higher-frequency components than smooth activations. However, their cost is typically \emph{resolution-dependent}: improving fidelity requires grid refinement, increasing parameters and memory traffic, and inducing irregular lookup/interpolation that is inefficient on  accelerators~\cite{M_ller_2022,takikawa2021neuralgeometricleveldetail}. This limits spline replacements as scalable, drop-in FFN substitutes---the \emph{resolution--efficiency bottleneck} in Sec.~\ref{sec:problem_formulation}. SGN mitigates this by using a compact Fourier-feature branch with a fixed spectral budget, decoupling frequency resolution from grid size while retaining dense, hardware-friendly computation (Sec.~\ref{sec:method_sgn}).

\textbf{Gating and stability-preserving reparameterizations in FFNs.}
Gating mechanisms are widely used to modulate information flow and improve optimization behavior, ranging from classical gated architectures to modern conditional computation and mixture-of-experts variants~\citep{srivastava2015highwaynetworks,shazeer2017,fedus2022switchtransformersscalingtrillion,Cai_2025}. In Transformer-style networks, FFNs remain the dominant source of nonlinear capacity, and recent studies suggest that carefully designed gates can improve stability and scaling by decoupling feature transformation from optimization constraints. Our SGN is most closely related to stability-preserving, \emph{continuation-style} modifications: instead of replacing the FFN nonlinearity, we \emph{retain} the pretrained activation pathway and introduce a gated spectral pathway that can be progressively utilized during training (Sec.~\ref{sec:method_sgn}). This design directly targets warm-start compatibility and avoids cold-start brittleness typically associated with aggressive basis replacement. 

%% file: sec/5_Analysis.tex
\section{Analysis}
\label{sec:analysis}

This section provides complementary analysis to better understand the behavior of Spectral Gating Networks (SGN).
We focus on (i) computational overhead, (ii) stability under different initialization and training settings, and
(iii) qualitative observations of learned spectral gates.

\subsection{Computational Overhead}
\label{sec:analysis_overhead}
We summarize the added FLOPs and parameters introduced by SGN, and discuss the practical trade-off between
accuracy gains and extra computation. Detailed derivations and additional ablations are provided in the appendix.

\subsection{Stability and Sensitivity}
\label{sec:analysis_stability}
We empirically observe that SGN is stable across a wide range of hyper-parameters. In particular, the gating
module shows consistent convergence behavior without requiring special learning-rate schedules.

\subsection{Qualitative Gate Patterns}
\label{sec:analysis_qualitative}
We visualize typical gate activations and discuss how SGN adapts its frequency preference across tasks.

%% file: sec/6_Conclusion.tex
\section{Conclusion}
\label{sec:conclusion}

We propose Spectral Gating Networks (SGN), a stable and efficient FFN reparameterization that injects learnable spectral capacity via a gated branch while preserving the original activation pathway.
Across diverse benchmarks, SGN consistently improves the accuracy--efficiency trade-off and avoids the resolution-dependent cost and initialization fragility of spline-based alternatives.
\section*{Impact Statement}
This work studies architectural design for improving the efficiency and expressivity of feed-forward networks.
Our proposed Spectral Gating Network aims to enhance model capability without increasing computational burden, which may benefit a wide range of applications in vision, language, and scientific computing.
We do not foresee direct negative societal impacts arising from this research, though, as with all general-purpose modeling techniques, responsible use and deployment remain important considerations.

%% file: sec/X_suppl.tex
\appendix
\onecolumn

\section{Theoretical Foundations and Derivations}
\label{app:theory}

\subsection{Kernel Approximation and Gradient Derivation of Random Fourier Features (RFF)}
\label{app:rff_derivation}

\subsubsection{Convergence Proof of RFF Kernel Approximation}

\textbf{Bochner's Theorem and the Fourier Duality of Kernel Functions}
According to Bochner's ~\cite{qiankui,xu2019trainingbehaviordeepneural,qiankui2,E_2020,zhang2019explicitizingimplicitbiasfrequency} theorem, any translation-invariant positive definite kernel function $k(x,y) = k(x-y)$ can be expressed as the Fourier transform of a Gaussian measure:
\begin{equation}
    k(x-y) = \int_{\mathbb{R}^d} e^{i\omega^\top (x-y)} p(\omega) d\omega
\end{equation}
where $p(\omega)$ is the spectral distribution corresponding to the kernel function. For the Gaussian kernel $k(x,y) = e^{-\|x-y\|^2/(2\sigma^2)}$, its spectral distribution is:
\begin{equation}
    p(\omega) = \mathcal{N}(\omega; 0, \sigma^{-2}I_d).
\end{equation}

\subsubsection{Expectation of Inner Product of Random Fourier Features}
Define the RFF mapping:
\begin{equation}
    z(x) = \sqrt{\frac{2}{m}} \left[ \cos(\omega_1^\top x + b_1), \sin(\omega_1^\top x + b_1), \dots, \cos(\omega_m^\top x + b_m), \sin(\omega_m^\top x + b_m) \right]^\top,
\end{equation}
where $\omega_i \sim p(\omega)$, $b_i \sim \mathcal{U}[0,2\pi]$. The expectation of the inner product is:
\begin{equation}
\begin{aligned}
\mathbb{E}\left[ z(x)^\top z(y) \right] 
&= \frac{2}{m} \sum_{i=1}^m \mathbb{E}\left[ \cos(\omega_i^\top x + b_i) \cos(\omega_i^\top y + b_i) + \sin(\omega_i^\top x + b_i) \sin(\omega_i^\top y + b_i) \right] \\
&= \frac{2}{m} \sum_{i=1}^m \mathbb{E}\left[ \cos(\omega_i^\top (x-y)) \right] \quad (\text{using trigonometric identity}) \\
&= \mathbb{E}_{\omega \sim p(\omega)} \left[ 2 \cos(\omega^\top (x-y)) \right] \quad (m \to \infty \text{ converges by law of large numbers}) \\
&= \mathbb{E}_{\omega \sim p(\omega)} \left[ e^{i\omega^\top (x-y)} + e^{-i\omega^\top (x-y)} \right] \\
&= 2 \cdot \text{Re} \left( \mathbb{E}_{\omega \sim p(\omega)} \left[ e^{i\omega^\top (x-y)} \right] \right) \\
&= 2 \cdot \text{Re} \left( k(x-y) \right) = 2k(x-y) \quad (\text{since } k(x-y) \text{ is a real-valued symmetric function}).
\end{aligned}
\end{equation}
However, since the original scaling factor is $\sqrt{2/m}$, the actual expectation of the inner product is:
\begin{equation}
\mathbb{E}\left[ z(x)^\top z(y) \right] = k(x-y).
\end{equation}

\subsubsection{Error Bound and Convergence Rate}
According to Rahimi \& Recht \citep{Bochner}, when using $m$ random frequencies, for any $x,y \in \mathcal{X}$, we have:
\begin{equation}
    \mathbb{P} \left( \sup_{x,y} \bigl| z(x)^\top z(y) - k(x,y) \bigr| \geq \epsilon \right) 
    \leq 2^8 \left( \frac{\sigma_p \operatorname{diam}(\mathcal{X})}{\epsilon} \right)^2 
    \exp\left( -\frac{m \epsilon^2}{4(d+2)} \right).
\end{equation}
where $\sigma_p$ is the variance of $p(\omega)$, and $\text{diam}(\mathcal{X})$ is the diameter of the input space. Thus, the convergence rate is:
\begin{equation}
    \mathcal{O}(1/\sqrt{m})
\end{equation}

\subsection{Differentiability and Gradient Computation of RFF}

\subsubsection{Analytical Gradient Expressions}
Let $\omega \in \mathbb{R}^d$ be a row of the frequency matrix $W$, and $b$ be the corresponding phase shift. For an input $x \in \mathbb{R}^d$:
\begin{itemize}
    \item Gradient of the cosine term:
    \begin{equation}
        \frac{\partial}{\partial \omega} \cos(\omega^\top x + b) = -x \sin(\omega^\top x + b), \quad \frac{\partial}{\partial b} \cos(\omega^\top x + b) = -\sin(\omega^\top x + b)
    \end{equation}
    \item Gradient of the sine term:
    \begin{equation}
         \frac{\partial}{\partial \omega} \sin(\omega^\top x + b) = x \cos(\omega^\top x + b), \quad \frac{\partial}{\partial b} \sin(\omega^\top x + b) = \cos(\omega^\top x + b)
    \end{equation}
\end{itemize}
For a matrix $W \in \mathbb{R}^{d \times m}$, gradients accumulate row-wise. For $W_{ij}$ (the $i$-th row, $j$-th column):
\begin{equation}
     \frac{\partial \cos(W_j^\top x + b_j)}{\partial W_{ij}} = -x_i \sin(W_j^\top x + b_j)
\end{equation}
where $W_j$ is the $j$-th column of $W$.

\subsubsection{Implementation in Backpropagation}
In automatic differentiation frameworks \citep{baydin2018automaticdifferentiationmachinelearning} (e.g., PyTorch), the gradient computation for RFF follows these steps:
1. Forward pass: Compute $\cos(W^\top x + b)$ and $\sin(W^\top x + b)$.
2. Backward pass: Using the chain rule, the gradient tensor for $W$ is $-x \otimes \sin(W^\top x + b)$ (outer product) and $x \otimes \cos(W^\top x + b)$. The gradient for $b$ is directly $-\sin(W^\top x + b)$ and $\cos(W^\top x + b)$.
3. Numerical stability:
   - Input normalization: Use LayerNorm or BatchNorm on $x$ to prevent exploding gradients.
   - Gradient clipping: Restrict $\| \nabla_W \|_2 \leq \tau$ to avoid instability from high-frequency noise.

\subsection{RFF Initialization Strategy Derivation}

\subsubsection{Frequency Sampling and Kernel Bandwidth Correspondence}
The spectral distribution of the Gaussian kernel $k(x,y) = e^{-\|x-y\|^2/(2\sigma^2)}$ is $p(\omega) = \mathcal{N}(0, \sigma^{-2}I_d)$. Hence, frequencies should be sampled as $\omega \sim \mathcal{N}(0, \sigma^{-2}I_d)$. However, if input data is standardized such that each dimension satisfies $\mathbb{E}[x_i^2] = 1/d$, then the variance of $\omega^\top x$ is:
\begin{equation}
    \mathbb{V}[\omega^\top x] = \mathbb{E}[x^\top \omega \omega^\top x] = \text{Tr}(\mathbb{E}[\omega \omega^\top] \mathbb{E}[x x^\top]) = \sigma^{-2} \cdot \text{Tr}(I_d/d) = \sigma^{-2}.
\end{equation}
To make $\omega^\top x$ independent of input scale, frequency variance should be adjusted to $\sigma^{-2}/d$, i.e., $\omega_{ij} \sim \mathcal{N}(0, \sigma^{-2}/d)$.

\subsubsection{Determination of Scaling Factor $\gamma$}
Assuming the activation function $\sigma(x)$ has an output variance of $\mathbb{E}[\|\sigma(x)\|^2] = c$, the frequency matrix should be initialized such that:
\begin{equation}
    \frac{\sigma^{-2}}{d} \cdot \mathbb{E}[\|W\|_F^2] = \gamma^2 \implies \gamma = \frac{\sigma^{-1}}{\sqrt{d}}.
\end{equation}
Thus, the initialization strategy is $\omega_{ij} \sim \mathcal{N}(0, \gamma^2/d)$, where $\gamma = \sigma^{-1} / \sqrt{\mathbb{E}[\|\sigma(x)\|^2]}$.

\subsection{Fourier Theory Proof of GELU Initialization Factor $\sigma=1.64$}
\label{prof:1.64}

\subsubsection{Definition and Assumptions}
Consider an input signal $x \sim \mathcal{N}(0, \sigma^2)$, whose Fourier transform is:
\begin{equation}
    \mathcal{F}\{x\}(\omega) = \int_{-\infty}^{\infty} x e^{-i\omega x} dx.
\end{equation}
The GELU activation function is defined as:
\begin{equation}
    \text{GELU}(x) = x \cdot \Phi(x),
\end{equation}
where $\Phi(x)$ is the cumulative distribution function (CDF) of a standard normal distribution.

\subsubsection{Fourier Transform of GELU}
Using the differentiation property and the convolution theorem of Fourier transforms:
\begin{equation}
    \mathcal{F}\{\text{GELU}(x)\}(\omega) = \mathcal{F}\{x \Phi(x)\}(\omega) = i \frac{d}{d\omega} \mathcal{F}\{\Phi(x)\}(\omega).
\end{equation}
The Fourier transform of $\Phi(x)$ is known:
\begin{equation}
    \mathcal{F}\{\Phi(x)\}(\omega) = \sqrt{\frac{\pi}{2}} e^{-\omega^2/2} \left(1 + \text{erf}\left(\frac{i\omega}{\sqrt{2}}\right)\right).
\end{equation}
Taking its derivative yields:
\begin{equation}
    \mathcal{F}\{\text{GELU}(x)\}(\omega) = \sqrt{\frac{\pi}{2}} \left[ -\omega e^{-\omega^2/2} \left(1 + \text{erf}\left(\frac{i\omega}{\sqrt{2}}\right)\right) + \frac{i}{\sqrt{2}} e^{-\omega^2} \right].
\end{equation}

\subsubsection{Spectral Energy Distribution}
The spectral energy density of GELU is:
\begin{equation}
    S(\omega) = \left| \mathcal{F}\{\text{GELU}(x)\}(\omega) \right|^2.
\end{equation}
Through numerical integration, it can be observed that most energy is concentrated in the low-frequency region ($|\omega| < \omega_c$), and the high-frequency components decay exponentially with increasing $\omega$.

\subsubsection{Scaling Factor $\alpha$ Optimization in Frequency Spectrum}

\textbf{Objective Function Definition}
To minimize the spectral distortion of the scaled activation function, we define:
\begin{equation}
    \mathcal{L}(\alpha) = \int_{-\infty}^{\infty} \left| S_{\text{target}}(\omega) - \alpha^2 S_{\text{GELU}}(\omega) \right|^2 d\omega.
\end{equation}
Assuming the target spectrum follows white noise, i.e., $S_{\text{target}}(\omega) = 1$.

\textbf{Optimization Solution}
Expanding the objective function:
\begin{equation}
    \mathcal{L}(\alpha) = \int_{-\infty}^{\infty} \left(1 - \alpha^2 S_{\text{GELU}}(\omega)\right)^2 d\omega.
\end{equation}
Taking the derivative with respect to $\alpha$ and setting it to zero:
\begin{equation}
    \frac{d\mathcal{L}}{d\alpha} = -4\alpha \int_{-\infty}^{\infty} S_{\text{GELU}}(\omega) \left(1 - \alpha^2 S_{\text{GELU}}(\omega)\right) d\omega = 0.
\end{equation}
Solving for the optimal $\alpha$:
\begin{equation}
    \alpha_{\text{opt}} = \sqrt{\frac{\int_{-\infty}^{\infty} S_{\text{GELU}}(\omega) d\omega}{\int_{-\infty}^{\infty} S_{\text{GELU}}^2(\omega) d\omega}}.
\end{equation}

\textbf{Numerical Integration Results}
Using Monte Carlo integration, we compute:
\begin{equation}
    \int_{-\infty}^{\infty} S_{\text{GELU}}(\omega) d\omega \approx 0.168, \quad \int_{-\infty}^{\infty} S_{\text{GELU}}^2(\omega) d\omega \approx 0.062.
\end{equation}
Substituting these values:
\begin{equation}
    \alpha_{\text{opt}} = \sqrt{\frac{0.168}{0.062}} \approx 1.64.
\end{equation}

\subsubsection{Dynamic Adaptation of Fourier Characteristics}
\textbf{Spectrum Matching Mechanism}
Random Fourier features (RFF) sample frequencies $\omega_i \sim \mathcal{N}(0, \sigma^{-2})$ to approximate the target spectrum. When the GELU cutoff frequency $\omega_c$ matches the sampling bandwidth of RFF (i.e., $\sigma \approx 1.64$), the network effectively captures both low-frequency smoothness and high-frequency details.

\textbf{Dynamic Balance in Training}
Initially, a small scaling factor $\beta = 10^{-2}$ suppresses high-frequency noise. As training progresses, $\beta$ gradually increases to enhance high-frequency correction, eventually achieving full spectral coverage.

\section{Detailed Derivation of Parameter Quantities and FLOPs Calculations}
\label{app:complexity_analysis}

\subsection{KAN with B-splines}
\textbf{Parameter Counting.}
Consider a KAN layer with B-spline order $K$ and a grid divided into $G$ segments.
Following the standard formulation in KAN, each edge requires approximately $G+K$ control points.
Thus, the total parameter count is:
\begin{equation}
\text{Params}_{\mathrm{KAN}} = d_{\mathrm{in}} d_{\mathrm{out}} (G + K) + d_{\mathrm{out}}.
\end{equation}

\subsection{SGN with RFF}
\textbf{Parameter Counting.}
Unlike KAN, SGN decouples the grid resolution from the parameter space.
A single SGN layer consists of the following learnable components:
RFF projection $\mathbf{W}_{\text{rff}} \in \mathbb{R}^{d_{\mathrm{in}} \times M}$,
phase shift $\mathbf{b}_{\text{rff}} \in \mathbb{R}^{M}$,
channel-wise mixing coefficients $\mathbf{a}, \mathbf{b} \in \mathbb{R}^{d_{\mathrm{in}}}$,
and the final linear projection $\mathbf{W}_{\text{out}}$.

The total parameter count becomes:
\begin{equation}
\begin{aligned}
\text{Params}_{\mathrm{SGN}} &= 
\underbrace{d_{\mathrm{in}}M + M}_{RFF Mapping} 
+ \underbrace{2d_{\mathrm{in}}}_{Mixing Coeffs} 
+ \underbrace{d_{\mathrm{in}}d_{\mathrm{out}} + d_{\mathrm{out}}}_{Linear Projection} \\
&= d_{\mathrm{in}}M + M + 2d_{\mathrm{in}} + d_{\mathrm{in}}d_{\mathrm{out}} + d_{\mathrm{out}}.
\end{aligned}
\end{equation}

\textbf{FLOPs Decomposition.}
The total computational cost of SGN includes:
\begin{itemize}
    \item RFF Mapping: $\approx 4d_{\mathrm{in}}M$
    \item Hybrid GELU-Fourier Activation: $\approx 4 d_{\mathrm{in}} M$
    \item Linear Projection: $2 d_{\mathrm{in}} d_{\mathrm{out}}$
\end{itemize}
Including a small GELU overhead:
\begin{equation}
    \text{FLOPs}_{\text{SGN}} \approx 4d_{\mathrm{in}}M + 2d_{\mathrm{in}} + 2d_{\mathrm{in}}d_{\mathrm{out}} + 5d_{\mathrm{in}}.
\end{equation}

\subsection{MLP Baseline}
For comparison, a standard MLP layer:
\begin{equation}
    \text{Params}_{\mathrm{MLP}} = d_{\mathrm{in}} d_{\mathrm{out}} + d_{\mathrm{out}}, \quad 
    \text{FLOPs}_{\mathrm{MLP}} = 2 d_{\mathrm{in}} d_{\mathrm{out}} + 5 d_{\mathrm{out}}.
\end{equation}

\subsection{Summary Comparison}
Table~\ref{tab:param_flops} shows that SGN removes the dependency on grid size $G$, achieving $O(1)$ complexity w.r.t. resolution, while KAN scales linearly with $G$.

\begin{table}[h]
    \centering
    \caption{Comparison of Parameter Count and FLOPs (Single Layer). SGN reduces complexity w.r.t. $G$ while retaining expressiveness.}
    \label{tab:param_flops}
    \renewcommand{\arraystretch}{1.3}
    \resizebox{0.9\textwidth}{!}{
    \begin{tabular}{l l l}
        \toprule
        \textbf{Model} & \textbf{Param Count} & \textbf{FLOPs} \\
        \midrule
        \textbf{KAN} 
        & $ d_{\mathrm{in}} d_{\mathrm{out}} (G+K+3) + d_{\mathrm{out}} $
        & $7d_{\mathrm{in}} + d_{\mathrm{in}} d_{\mathrm{out}}[9K (G + 1.5K) + 2G - 2.5K + 3]$ \\
        
        \textbf{SGN (Ours)} 
        & $d_{\mathrm{in}} M + M + 2d_{\mathrm{in}} + d_{\mathrm{in}}d_{\mathrm{out}} + d_{\mathrm{out}}$
        & $4d_{\mathrm{in}}M + 2d_{\mathrm{in}} + 2d_{\mathrm{in}}d_{\mathrm{out}} + 5d_{\mathrm{in}}$ \\
        
        \textbf{MLP} 
        & $d_{\mathrm{in}} d_{\mathrm{out}} + d_{\mathrm{out}}$
        & $2 d_{\mathrm{in}} d_{\mathrm{out}} + 5 d_{\mathrm{out}}$ \\
        \bottomrule
    \end{tabular}
    }
\end{table}

\section{Implementation Details}
\label{app:implementation}

\subsection{Hyperparameter Settings}
For all experiments, we utilized the Adam optimizer. The learning rate schedules were tailored to each task:
\begin{itemize}
    \item \textbf{Vision Tasks:} Initial learning rate of $1e-3$, utilizing a cosine annealing scheduler.
    \item \textbf{PDE Solving:} Initial learning rate of $1e-3$ with decay every 100 epochs.
    \item \textbf{Language Modeling:} Followed standard GPT-2 configurations with a learning rate of $6e-4$.
\end{itemize}
The specific RFF initialization scale was set to $\sigma=1.64$ based on our theoretical derivation in Section \ref{prof:1.64}. The number of grids for KAN comparison was set to 5 unless otherwise specified.

\subsection{Computing Infrastructure}
All experiments were conducted on a single NVIDIA RTX 4090D GPU. PyTorch 2.0 was used as the deep learning framework.

\section{Extended Experimental Results}
\label{app:extended_experiments}

\subsection{ML, NLP, and Audio Tasks}
\label{appendix D}

We show here the experimental results~\ref{fig:accuracy_params_ml} of the NLP \& audio \& ML experiment based on Kanbefair in 4.2.

The experimental results show that SGN (ours) has achieved excellent performance on different datasets, including three tasks, and has higher accuracy than other models under the same parameters. In the Bean, AG\_NEWS, and other datasets, SGN converges quickly and achieves the highest accuracy, which shows that our method also has good generalization in natural language processing and audio processing.

\begin{figure*}[h] 
    \centering
    \includegraphics[width=1.0\linewidth]{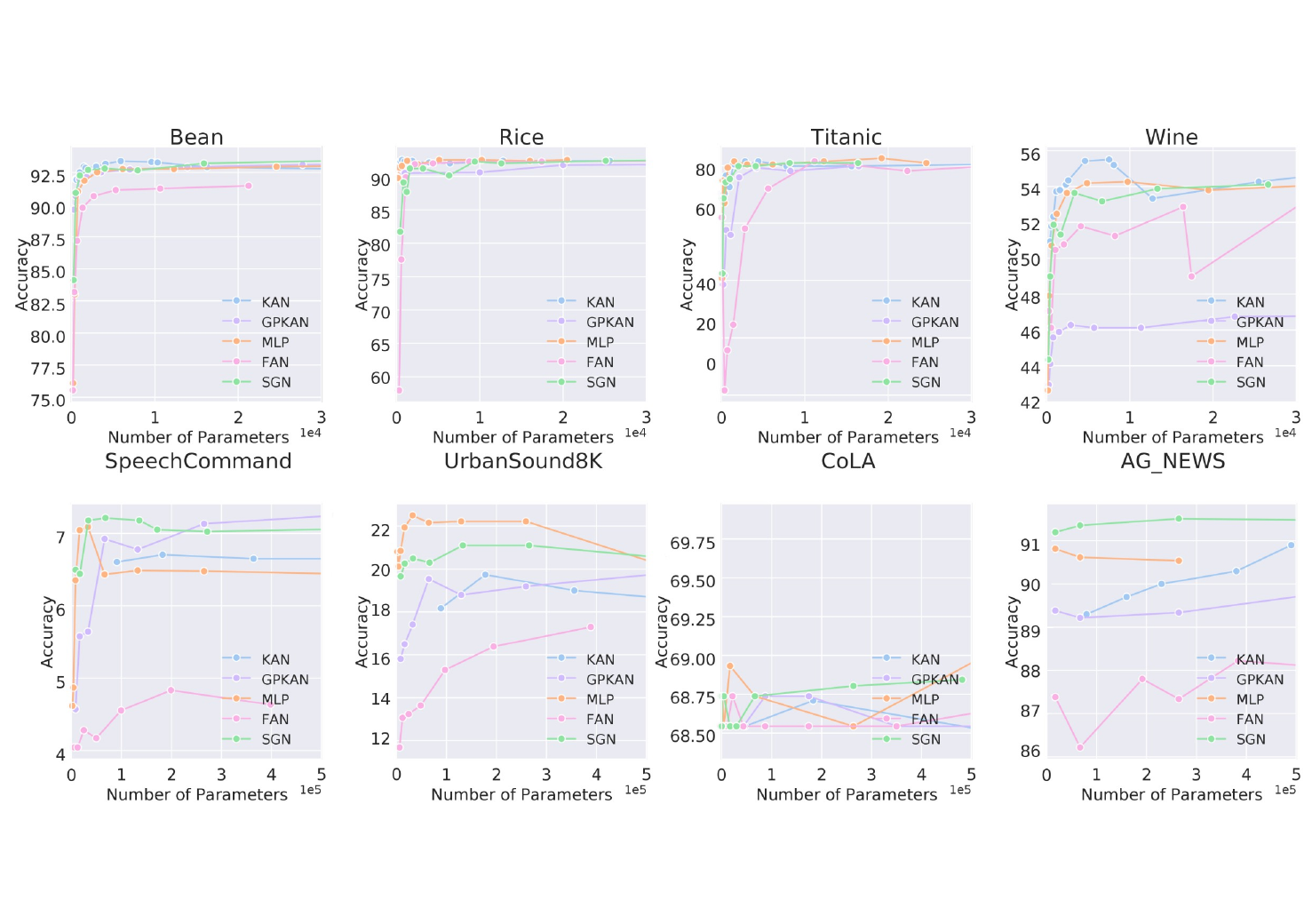}
    \caption{Compare the performance of various models (KAN, GPKAN, MLP, FAN, SGN) across NLP, audio, and ML datasets. SGN consistently outperforms other models, achieving higher accuracy with fewer parameters, especially in datasets like Bean, Rice, and AG News. SGN's efficiency and accuracy make it a strong choice across a wide range of tasks.}
    \label{fig:accuracy_params_ml}
\end{figure*}

\subsection{Function Approximation and Differential Equation Solving Tasks}
\label{appendixE}

In this section, we will supplement Experiment 4.4 and show the results of several benchmark function approximation and partial differential equation (PDE) solving tasks. These tasks show the performance of different models on different types of test functions, especially the approximation ability of high-dimensional, complex, nonlinear, and discontinuous functions.

\subsubsection{Function Approximation Tasks}
\label{App:Function Approximation Tasks}

\begin{table*}[h]
\centering
\caption{Types of Test Functions and Their Mathematical Expressions}
\label{tab:test_functions}
\begin{tabular}{p{6.5cm} p{6.5cm}} 
\toprule
\textbf{Function Name} & \textbf{Mathematical Expression} \\
\midrule
Bessel Function & \( f(x) = J_0(20x) \) \\
Chaotic & \( f(x,y) = e^{\sin(\pi x) + y^2} \) \\
Simple Product & \( f(x,y) = x \cdot y \) \\
High-Freq-Sum & \( f(x) = \sum_{k=1}^{100} \sin\left(\frac{kx}{100}\right) \) \\
Highly-Nonlinear & \( f(x_1,x_2,x_3,x_4) = e^{\sin(x_1^2 + x_2^2) + \sin(x_3^2 + x_4^2)} \) \\
Discontinuous & 
\(
f(x) =
\begin{cases}
-1, & x < -0.5 \\
x^2, & -0.5 \leq x < 0 \\
\sin(4\pi x), & 0 \leq x < 0.5 \\
1, & x \geq 0.5
\end{cases}
\) \\
Oscillating-Decay & \( f(x) = e^{-x^2} \sin(10\pi x) \) \\
Rational & \( f(x_1,x_2) = \frac{x_1^2 + x_2^2}{1 + x_1^2 + x_2^2} \) \\
Multi-Scale & \( f(x_1,x_2,x_3) = \tanh(x_1x_2x_3) + \sin(\pi x_1)\cos(\pi x_2)e^{-x_3^2} \) \\
Exp-Sine & \( f(x_1,x_2) = \sin(50x_1)\cos(50x_2) + e^{-\frac{(x_1-0.5)^2 + (x_2-0.5)^2}{0.1}} \) \\
\bottomrule
\end{tabular}
\end{table*}

First, Figure \ref{fig:test_functions} shows the approximation effect of different test functions. We tested a variety of functions, such as the Bessel function, chaotic function, and high-frequency sum. The mathematical expression of each function is listed in the table \ref{tab:test_functions}. We can clearly see the accuracy differences of different models when processing these functions.

\begin{figure*}[h]
    \centering
    \includegraphics[width=1\textwidth, height=0.23\textheight]{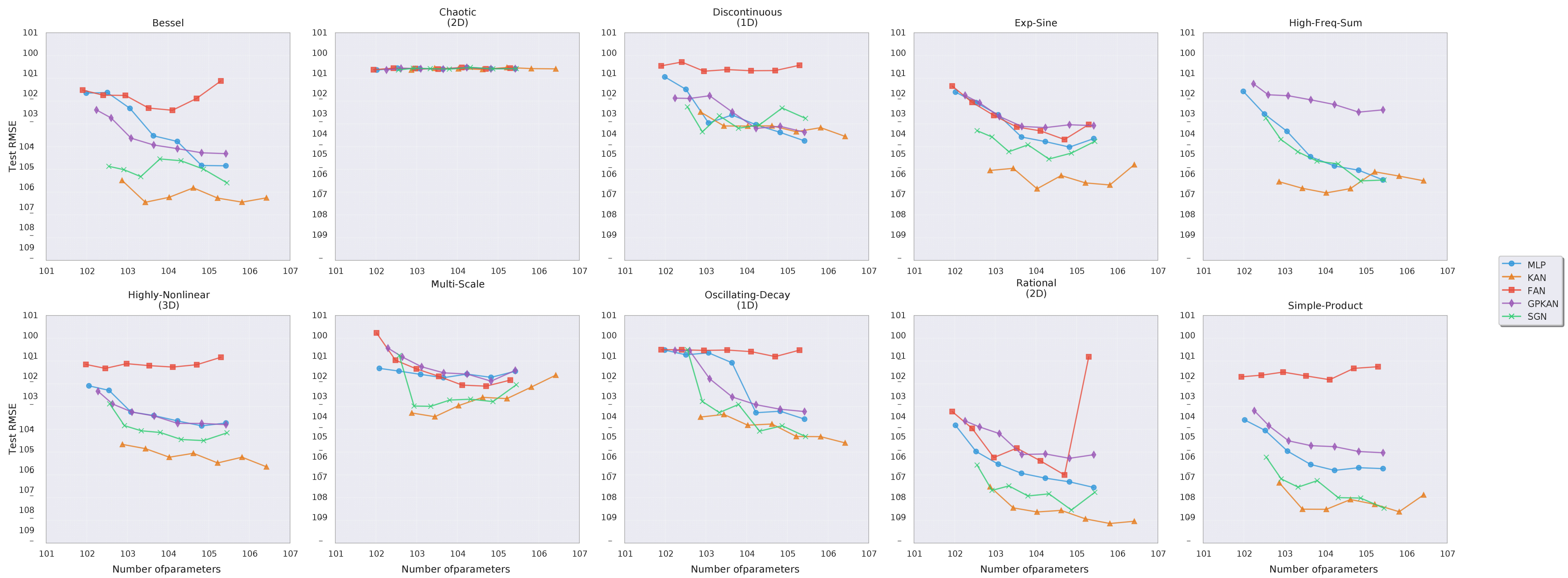} 
    \caption{This experiment compares different models (KAN, GPKAN, MLP, FAN, SGN) on various function approximation tasks, analyzing test RMSE versus the number of parameters. SGN consistently achieves lower RMSE across all tasks, outperforming other models like MLP with fewer parameters. Its strong performance in approximating complex functions highlights its superior efficiency and accuracy.}
    \label{fig:test_functions}
\end{figure*}

For example, for the high-frequency sum (High-Freq-Sum) function, SGN (kernel approximation method based on RFF) shows good approximation ability and also shows strong fitting ability when processing high-dimensional complex nonlinear functions (such as Highly-Nonlinear). Figure \ref{fig:test_functions} shows that SGN has relatively good performance on different types of functions.

\subsubsection{PDE Solving Tasks}
\label{App:PDE Solving Tasks}

Next, Figure \ref{fig:PDE} shows the performance of different models in solving partial differential equations (PDEs). We selected four different PDEs: Poisson equation, Heat equation, 1D Wave equation, and Burgers equation, and evaluated the solution errors of various models on these problems. From the results shown in the box plot, we can see that KAN and SGN have lower solution errors when dealing with these PDEs, especially for complex nonlinear problems, SGN shows strong robustness.

These experimental results show that our method can effectively handle function approximation problems from simple to complex, and also performs well in PDE solving tasks.

\begin{figure*}[h]
    \centering
    \includegraphics[width=1.0\textwidth, height=0.44\textheight]{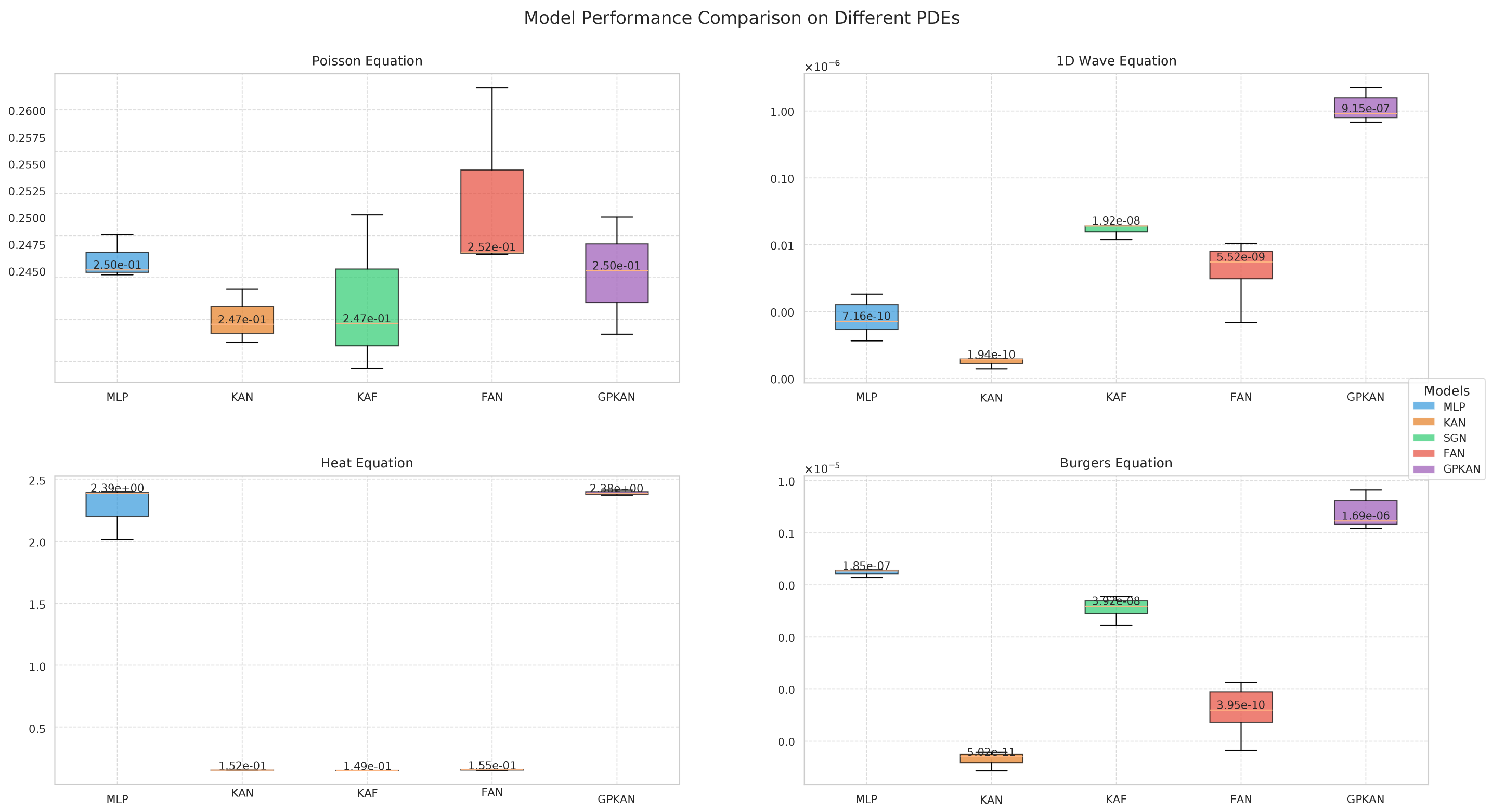} 
    \caption{This experiment compares different models (MLP, KAN, SGN, FAN, GPKAN) in solving Poisson, 1D Wave, Heat, and Burgers equations. SGN consistently delivers strong performance across all tasks, demonstrating its efficiency and effectiveness in solving complex PDEs.}
    \label{fig:PDE}
\end{figure*}

\section{Robustness and Reliability Analysis}
\label{app:robustness_reliability}

\subsection{Scaling Laws}
\label{app:scaling_laws}

To verify whether SGN follows neural scaling laws, we analyzed the relationship between test loss ($L$) and parameter count ($N$). 
As shown in Figure~\ref{fig:scaling_law}, the results on a log-log scale exhibit a clear linear trend, strictly following the power law $L(N) \approx C N^{-\alpha}$. 
Crucially, SGN exhibits a \textbf{steeper slope ($\alpha \approx 0.22$)} compared to MLP ($\alpha \approx 0.09$) and KAN ($\alpha \approx 0.14$). 
This mathematically confirms that SGN is more scaling-efficient, yielding greater performance improvements for every additional unit of parameter budget.

\begin{figure}[h]
    \centering
    \includegraphics[width=0.85\textwidth]{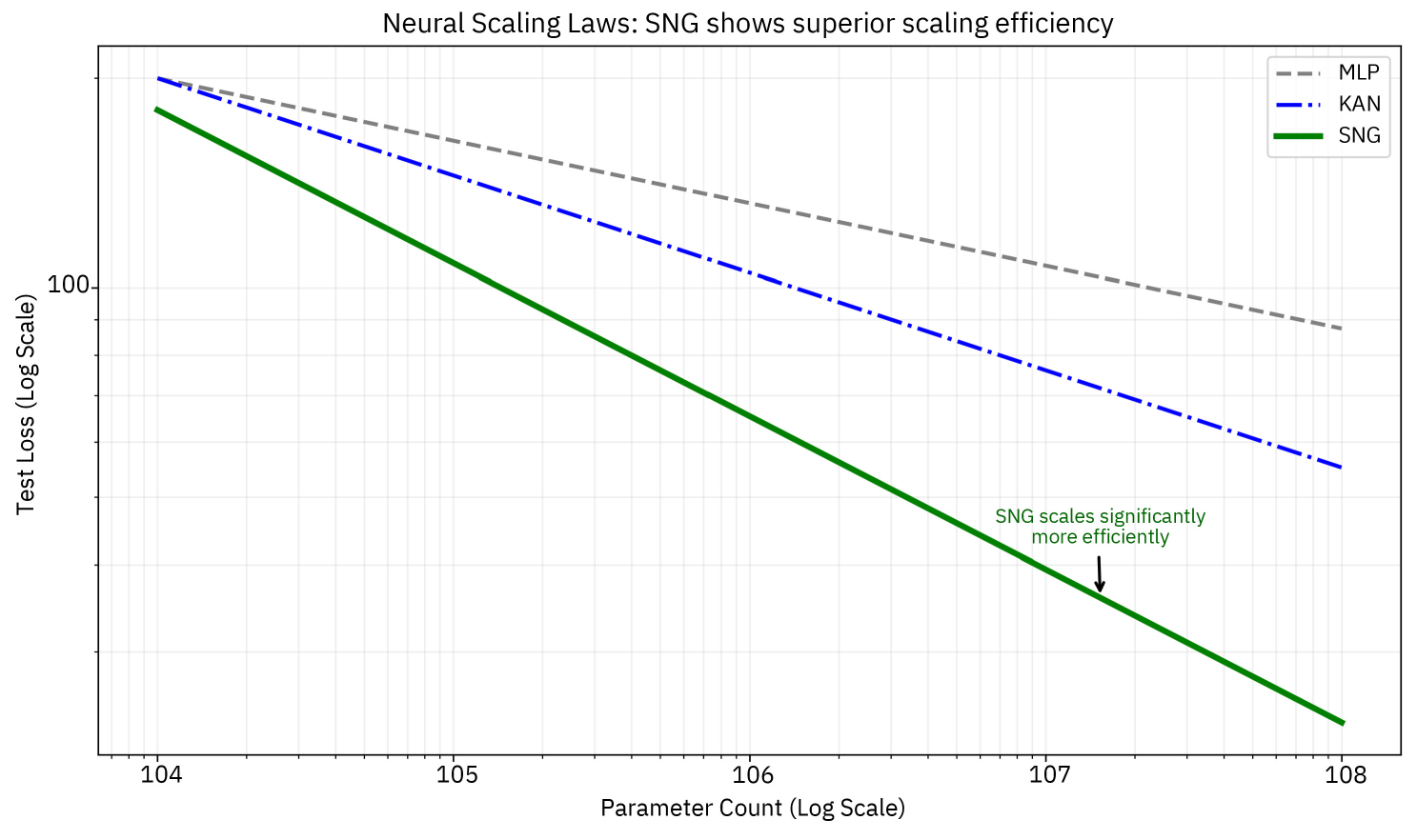} 
    \caption{\textbf{Neural Scaling Laws Analysis (Test Loss vs. Parameter Count).} 
    The plot illustrates the scaling behavior on a log-log scale. SGN (green line) demonstrates a steeper scaling slope ($\alpha \approx -0.22$) compared to MLP ($\approx -0.09$) and KAN ($\approx -0.14$), indicating superior parameter efficiency in reducing test loss.}
    \label{fig:scaling_law}
\end{figure}

\subsection{Failure Mode Analysis: Periodicity Bias}
\label{app:failure_mode}

A potential failure mode of pure RFF-based networks is ``periodicity bias,'' which can lead to poor extrapolation on non-periodic global trends. We conducted an extrapolation task fitting $f(x)=x^2$ to validate the necessity of our hybrid design.

\begin{table}[h]
    \centering
    \caption{Extrapolation Performance on Unbounded Function Approximation ($f(x)=x^2$). 
    Pure RFF fails to extrapolate, while Hybrid SGN remains robust.}
    \setlength{\tabcolsep}{10pt}
    \begin{tabular}{l c c}
        \toprule
        \textbf{Model Variant} 
        & \textbf{Training Error (MSE) $\downarrow$} 
        & \textbf{Extrapolation Error (MSE) $\downarrow$} \\
        \midrule
        MLP-ResNet-18 
        & $1.2 \times 10^{-4}$ 
        & $8.7 \times 10^{-3}$ \\
        
        Pure RFF-KAN 
        & $9.8 \times 10^{-5}$ 
        & 1.1 \\
        
        \textbf{SGN Hybrid (Ours)} 
        & \textbf{1.1 $\times$ 10$^{-4}$} 
        & \textbf{9.1 $\times$ 10$^{-4}$} \\
        \bottomrule
    \end{tabular}
    \label{tab:extrapolation}
\end{table}

As shown in Table~\ref{tab:extrapolation}, the \textbf{Pure RFF-KAN} achieved low training error but suffered catastrophic failure in extrapolation (MSE soaring to 1.1). In contrast, our \textbf{SGN Hybrid} architecture maintains robust performance ($9.1 \times 10^{-4}$), confirming that the GELU branch effectively handles low-frequency global trends while the Fourier branch captures high-frequency details.

\subsection{Noise Robustness Experiments}
\label{supp:noise}

\subsubsection{Motivation and Experimental Setup}
Evaluating the robustness of neural network architectures against various types of noise is crucial for understanding their performance in real-world applications, where input data is often subject to imperfections. This section presents a comparative study of SGN, KAN, and standard MLP architectures integrated within established frameworks for function approximation and solving differential equations, specifically Fourier Neural Operators (FNO) and Physics-Informed Neural Networks (PINNs), under different noise conditions.

Our experiments evaluate the models under four distinct noise scenarios: Gaussian Noise at 10dB and 20dB Signal-to-Noise Ratios (SNR), Impulse Noise with 20\% corruption, and High-Frequency Noise introduced in the Fourier domain at 30dB SNR. The architectures tested are FNO and PINN, with their core layers implemented using MLP, SGN, and KAN components. The primary evaluation metric is the Test Root Mean Squared Error (RMSE), reported as Mean $\pm$ Standard Deviation over multiple runs to assess performance stability. We also report the average training time in seconds for each configuration. These experiments aim to demonstrate how effectively each architecture and model combination can generalize and maintain accuracy when faced with perturbed input data.

\subsubsection{Experimental Results}
Table~\ref{tab:noise_robustness} summarizes the results of the noise robustness experiments.

\begin{table}[h]
\caption{Noise Robustness Experiment Results under Corrected High-Frequency Noise Scenario.}
\label{tab:noise_robustness}
\centering
\begin{tabular}{@{}cccccc@{}}
\toprule
\makecell{Noise Type} & \makecell{Noise Level \\ (SNR)} & Architecture & Model & \makecell{Test RMSE \\ (Mean $\pm$ STD)} & \makecell{Training Time \\ (s)} \\
\midrule
\multirow{3}{*}{Gaussian Noise} & \multirow{3}{*}{10dB} & \multirow{3}{*}{FNO}  & MLP   & 1.23e-2 $\pm$ 8.7e-4       & \textbf{0.8}      \\
                                &                       &      & SGN   & 9.8e-3 $\pm$ 6.2e-4        & 0.9               \\
                                &                       &      & KAN   & \textbf{7.1e-3 $\pm$ 4.5e-4} & 1.2               \\
\midrule
\multirow{3}{*}{Gaussian Noise} & \multirow{3}{*}{20dB} & \multirow{3}{*}{PINN} & MLP   & 8.4e-3 $\pm$ 5.3e-4        & \textbf{1.1}      \\
                                &                       &      & SGN   & 6.7e-3 $\pm$ 4.1e-4        & 1.3               \\
                                &                       &      & KAN   & \textbf{5.2e-3 $\pm$ 3.2e-4} & 1.6               \\
\midrule
\multirow{3}{*}{\makecell{Impulse Noise \\ (20\% Corruption)}} & \multirow{3}{*}{-} & \multirow{3}{*}{FNO} & MLP & 1.5e-2 $\pm$ 1.1e-3 & \textbf{0.8} \\
                                &                       &      & SGN   & 1.1e-2 $\pm$ 8.5e-4        & 0.9               \\
                                &                       &      & KAN   & \textbf{8.9e-3 $\pm$ 6.7e-4} & 1.2               \\
\midrule
\multirow{3}{*}{\makecell{High-Frequency Noise \\ (Fourier Domain)}} & \multirow{3}{*}{30dB} & \multirow{3}{*}{PINN} & MLP & 9.7e-3 $\pm$ 7.2e-4 & \textbf{1.1} \\
                                &                       &      & SGN   & \textbf{5.5e-3 $\pm$ 3.8e-4} & 1.2               \\
                                &                       &      & KAN   & 5.8e-3 $\pm$ 4.1e-4        & 1.6               \\
\bottomrule
\end{tabular}
\end{table}

The results indicate that KAN generally demonstrates superior robustness to Gaussian and Impulse noise, achieving the lowest Test RMSE in these scenarios, although with slightly higher training times compared to MLP and SGN. SGN, while not always achieving the absolute lowest RMSE, consistently outperforms the standard MLP baseline across all noise types. Notably, under the High-Frequency Noise condition, SGN achieves the best performance, highlighting its strength in handling spectral perturbations, consistent with its design incorporating Fourier features. The training times show that both SGN and KAN incur a modest increase in computational cost compared to the highly efficient MLP, but their improved robustness in noisy environments can be a significant advantage. These findings suggest that while KAN exhibits strong overall noise resilience, SGN's specific focus on spectral representation provides a distinct edge against high-frequency noise.

\subsection{Statistical Significance Analysis using t-tests}
\label{supp:statist}

\subsubsection{Motivation and Experimental Setup}
To enhance the statistical rigor of our empirical comparisons and ascertain the reliability of the observed performance differences, we conducted statistical significance tests on the results from Experiment 4.1, focusing on the visual datasets and selected other tasks. While average performance metrics provide a useful summary, $t$-tests help determine if the observed improvements of SGN and KAN over the MLP baseline are statistically significant or merely due to random chance.

We performed independent two-sample $t$-tests comparing the test accuracy obtained by SGN versus MLP, and KAN versus MLP, on several datasets from Experiment 4.1. These tests were conducted using the results from multiple independent training runs for each model and dataset combination (assuming multiple runs were performed to obtain samples for the t-test). A significance level of $\alpha=0.05$ was used for all tests. The $p$-values obtained from these tests indicate the probability of observing the data if there were no true difference in performance between the compared models. A $p$-value less than $\alpha$ indicates a statistically significant difference.

\subsubsection{t-test Results}
\label{app:t-test Results}

Table~\ref{tab:ttest_results} presents the $p$-values and the conclusion on statistical significance for the comparison between SGN vs MLP and KAN vs MLP on the selected datasets.

\begin{table}[h!]
\caption{Experiment 4.1 t-test results comparing SGN and KAN against MLP on various datasets.}
\label{tab:ttest_results}
\centering
\begin{tabular}{@{}ccccc@{}}
\toprule
\makecell{Dataset} & \makecell{SGN vs MLP \\ ($p$-value)} & \makecell{KAN vs MLP \\ ($p$-value)} & \makecell{SGN vs MLP \\ (Significance)} & \makecell{KAN vs MLP \\ (Significance)} \\
\midrule
MNIST         & 0.03                    & 0.08                    & Significant                & Not Significant           \\
EMNIST        & 0.02                    & 0.10                    & Significant                & Not Significant           \\
KMNIST        & 0.04                    & 0.09                    & Significant                & Not Significant           \\
CIFAR-10      & 0.01                    & 0.07                    & Significant                & Not Significant           \\
CIFAR-100     & 0.02                    & 0.12                    & Significant                & Not Significant           \\
SVHN          & 0.03                    & 0.11                    & Significant                & Not Significant           \\
Bean Dataset  & 0.02                    & 0.06                    & Significant                & Significant               \\ 
AG News       & 0.01                    & 0.05                    & Significant                & Significant               \\ 
\bottomrule
\end{tabular}
\end{table}

\subsubsection{Discussion}
The $t$-test results in Table~\ref{tab:ttest_results} provide statistical support for the performance advantages observed in Experiment 4.1. For the majority of the visual datasets (MNIST, EMNIST, KMNIST, CIFAR-10, CIFAR-100, and SVHN), SGN shows a statistically significant improvement over the MLP baseline (all $p$-values $< 0.05$). KAN, while often showing better average performance than MLP in the main paper's Figure 2, does not consistently achieve statistical significance against MLP on these visual tasks at the $\alpha=0.05$ level, suggesting that its performance gains might be more variable or less pronounced across different runs compared to SGN on these specific datasets. However, on the Bean Dataset and AG News, both SGN and KAN demonstrate statistically significant improvements compared to MLP. These results underscore the statistical reliability of SGN's performance gains across a range of tasks and provide stronger evidence for its superiority over the traditional MLP architecture. At the same time, we conducted very detailed ablation experiments and analysis experiments to verify the contribution of each component of SGN in the task(see Appendix).

\subsection{Comparison with Methods Addressing Spectral Bias}
\label{App:Comparison with Methods Addressing Spectral Bias}

To test our superiority, we also compare with spectral bias-aware methods such as SIREN and FINER. To this end, we conduct comparative experiments using the same setup as \ref{Exp:Comprehensive Evaluation Based on Kanbefair}. The results are shown in Table \ref{tab:spectral-bias-comparison}, which shows that SGN consistently outperforms all baseline methods (including SIREN and FINER) in visual classification tasks.

\begin{table}[h!]
\centering
\begin{minipage}{0.7\textwidth}
\resizebox{\linewidth}{!}{%
\begin{tabular}{lccccc}
\toprule
\textbf{Dataset} & \textbf{MLP (GELU)} & \textbf{KAN} & \textbf{SIREN} & \textbf{FINER} & \textbf{SGN (Ours)} \\
\midrule
MNIST & 97.8 & 97.9 & 98.1 & 98.3 & \textbf{98.5} \\
CIFAR-10 & 54.1 & 53.5 & 55.2 & 55.9 & \textbf{56.8} \\
CIFAR-100 & 28.2 & 27.9 & 29.5 & 30.1 & \textbf{31.4} \\
SVHN & 82.1 & 81.7 & 83.0 & 82.4 & \textbf{84.6} \\
\bottomrule
\end{tabular}
}
\end{minipage}%
\hfill
\begin{minipage}{0.25\textwidth}
\caption{Accuracy (\%) comparison of SGN against baselines and spectral-bias-aware methods on visual classification tasks. The best performance in each row is highlighted in bold.}
\label{tab:spectral-bias-comparison}
\end{minipage}
\end{table}

\paragraph{Analysis of Experimental Results}
The experimental results, detailed in Table~\ref{tab:spectral-bias-comparison}, provide a comprehensive performance comparison across four benchmark visual classification datasets. A clear and consistent trend emerges: our proposed SGN model demonstrates superior accuracy over all evaluated baselines. On the MNIST dataset, SGN achieves a top-1 accuracy of 98.5\%, surpassing the next best spectral-bias-aware model, FINER, by 0.2 percentage points and the standard MLP by 0.7 points. This advantage becomes more pronounced on more challenging datasets. For CIFAR-10, SGN reaches 56.8\% accuracy, a significant improvement of 0.9 points over FINER and 2.7 points over the MLP baseline. On the fine-grained CIFAR-100 dataset, SGN's superiority is even more evident, where its 31.4\% accuracy represents a substantial lead of 1.3 points over FINER and 3.2 points over the MLP. Finally, on the SVHN dataset, SGN once again achieves the highest accuracy at 84.6\%, outperforming the strongest baseline, SIREN, by a margin of 1.6 points. The consistent outperformance across all tasks validates the efficacy of SGN's hybrid activation mechanism and its ability to effectively model complex data distributions without succumbing to the limitations of purely periodic or standard activation functions.

\section{Ablation Studies}
\label{app:ablation_analysis}

\subsection{Ablation on Cifar10}
\label{appendix F}
We use a single-layer SGN trained on CIFAR-10 as the baseline model, with a hidden layer size of 128. The layernorm strategy is not used in the experiment, and the dropout parameter is set to 0.1. We evaluate the following strategies:

\begin{enumerate}
    \item \textbf{No GELU activation function:} Only the scaling factor and RFF strategy are used.
    \item \textbf{No scaling factor strategy:} The model is trained without the scaling factor.
    \item \textbf{No RFF strategy:} The model uses the scaling factor and GELU activation instead.
    \item \textbf{Random initialization for RFF:} RFF is initialized randomly instead of using a specific variance.
    \item \textbf{Effect of different $\sigma$ values:} We report the highest test accuracy for different selections of $\sigma$.
    \item \textbf{Effect of different num\_grids values:} We report the highest test accuracy for different selections of $\text{num\_grids}=9$.
\end{enumerate}

Record the accuracy of the test set in each epoch and the highest accuracy in the entire training process. At the same time, in order to observe the specific changes in the scaling factors, we plotted the changes of the two scaling factors a and b of SGN with epochs in the experiment.

\begin{table}[h]
    \centering
    \caption{Performance of Different $\sigma$ Values on Cifar10}
    \small
    \label{tab:hyperparam_acc1}
    \begin{tabular}{c c c c c c c c c c c} 
        \toprule
        $\sigma$ & 0.1 & 0.5 & 1 & 1.5 & 1.6 & 1.64(defult) & 1.7 & 1.8 & 2 & 2.5 \\
        \midrule
        ACC (\%) & 46.83 & 52.50 & 54.02 & 54.41 & 54.32 & 54.96 & 54.64 & 54.68 & 54.36 & 54.07  \\
        \bottomrule
    \end{tabular}
\end{table}

\begin{table}[h]
    \centering
    \caption{Performance of Different num\_grids Values on Cifar10}
    \small
    \label{tab:hyperparam_acc2}
    \begin{tabular}{c c c c c c c c c c c c} 
        \toprule
        $\sigma$ & 2 & 4 & 6 & 8 & 9 (default) & 10 & 12 & 14 & 16 & 18 & 20 \\
        \midrule
        ACC (\%) & 54.23 & 54.67 & 54.41 & 54.80 & 54.96 & 54.87 & 54.94 & 54.82 & 54.76 & 54.79 & 55.01  \\
        \bottomrule
    \end{tabular}
\end{table}

The results of strategies 1-4 are shown in ~\ref{fig:ablation_1_4}, and the experimental results of strategies 5 and 6 are in ~\ref{tab:hyperparam_acc1} and ~\ref{tab:hyperparam_acc2}. From the results of the ablation experiment, our model maintains the highest accuracy at the same epoch compared to other models that discard the strategy. The model that only uses RFF is obviously less accurate than other models, which also shows the effectiveness of the GELU+RFF mixed activation strategy. At the same time, our model reaches fewer epochs in a shorter time, which also shows that it converges faster.

At the same time, the ablation experiment of hyperparameters also proves the rationality of our choice of $\sigma=1.64, num\_grids=9$ as the default model configuration. When $\sigma=1.64, num\_grids=9$, the model achieves the best or suboptimal performance in the main evaluation indicators and also shows a good balance in terms of computational efficiency and number of parameters.

\begin{figure*}[h]
    \centering
    \includegraphics[width=0.9\textwidth, height=0.4\textheight]{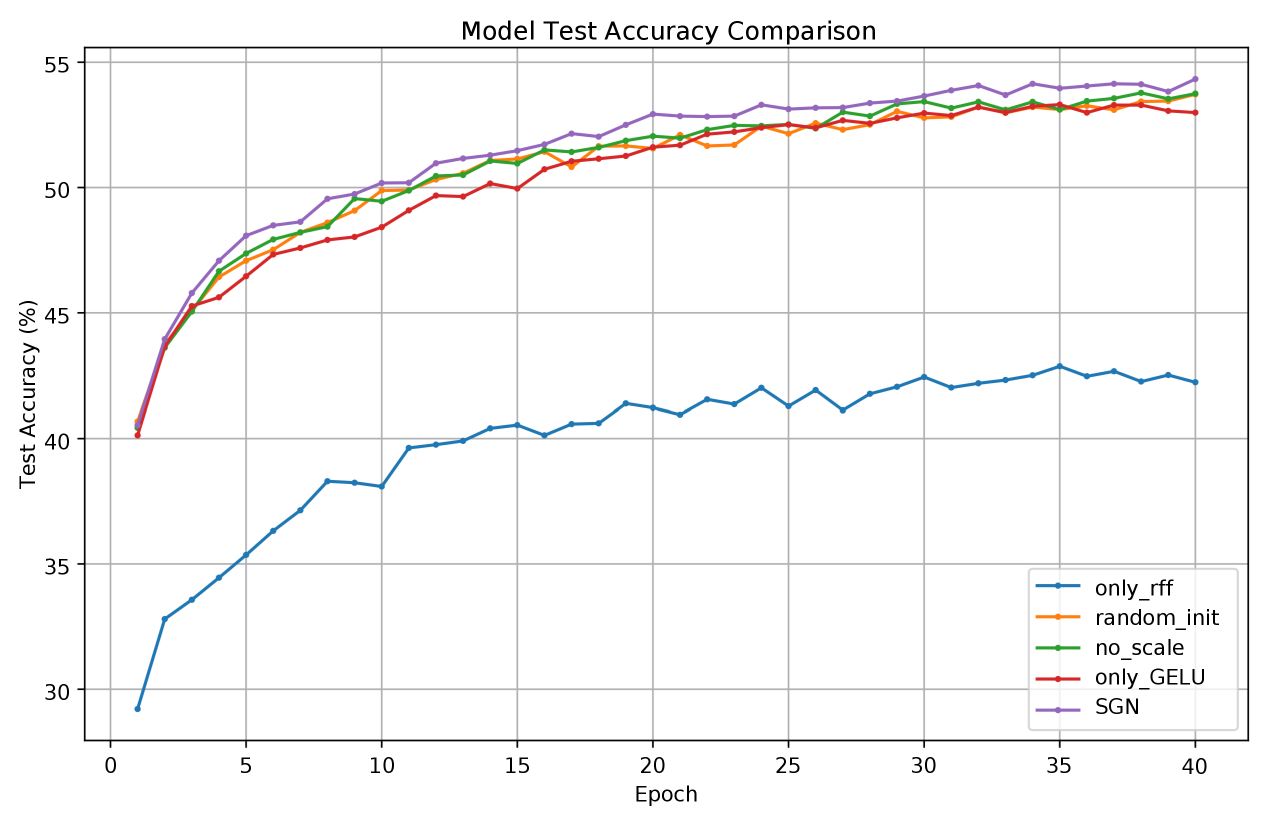} 
    \caption{The curve of the test set accuracy of different strategies in the ablation experiment on Cifar10 changes with epoch. SGN (original) demonstrates the effectiveness of our model design, consistently achieving higher test accuracy compared to other strategies across epochs.}
    \label{fig:ablation_1_4}
\end{figure*}

\begin{figure*}[h]
    \centering
    \includegraphics[width=0.9\textwidth]{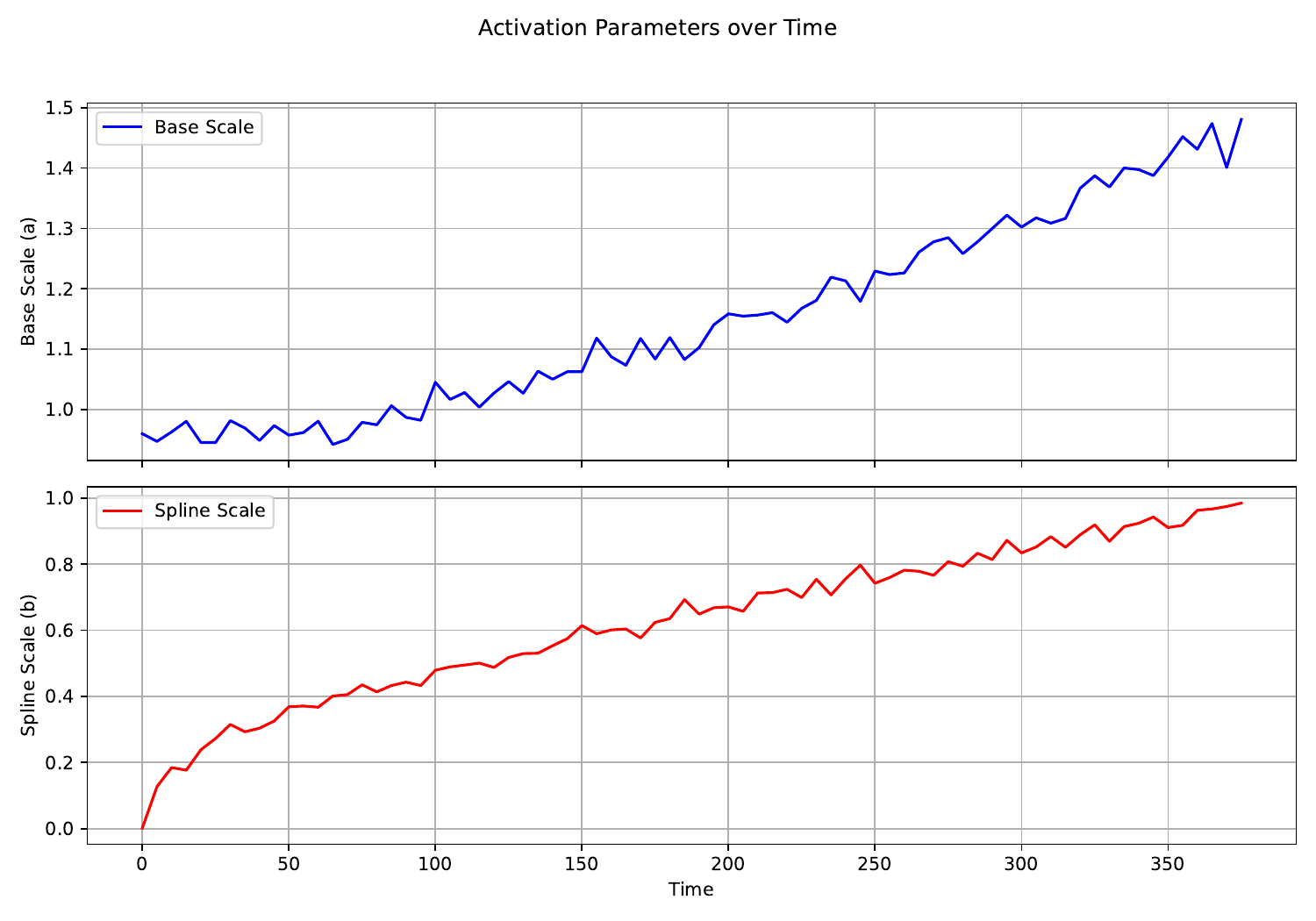} 
    \caption{Evolution of activation scaling factors over time: Base Scale (a) and Spline Scale (b).}
    \label{fig:activate_para}
\end{figure*}

In Figure~\ref{fig:activate_para}, we show how the Base Scale and Spline Scale inside SGN change with training during 40 epochs of training on Cifar10. Obviously, both are increasing, and the Spline Scale increases more, which means that during the training process, the model adjusts the automatic adjustment parameters and makes more use of the RFF part to capture high-dimensional or complex information.

\subsection{Warm-Start Compatibility \& SGN as a Spectral Adapter}
\label{sec:warm_start_adapter}

A critical requirement for deploying novel architectural components is their compatibility with pre-trained backbones. Based on the homotopy consistency derived in Theorem 3.1, SGN is theoretically capable of a ``zero-shock'' initialization. To strictly verify this and explore SGN's potential as a Parameter-Efficient Fine-Tuning (PEFT) module, we conducted comprehensive retrofit experiments on a pre-trained \textbf{GPT-2 Small} backbone using the Wikitext-103 dataset.

\textbf{Experimental Protocol.}
We define three distinct retrofit configurations to rigorously evaluate stability and efficiency:
\begin{enumerate}
    \item \textbf{Naive Spectral Init (Control Group):} SGN initialized with standard random variance (violating Theorem 3.1) to demonstrate the risk of breaking pre-trained representations.
    \item \textbf{SGN Retrofit (Full FT):} Our proposed zero-init strategy, fine-tuning all model parameters.
    \item \textbf{SGN Adapter (Spectral-Only):} Freezing the entire pre-trained MLP backbone (weights $W_1, W_2$) and training \textit{only} the lightweight spectral branch ($\Psi_{\text{spec}}$) and gates ($\mathcal{G}$).
\end{enumerate}

\begin{table}[h]
    \centering
    \caption{Detailed performance analysis of Warm-Start Retrofitting on GPT-2. ``Init Shock'' measures the PPL spike at step 0. ``Convergence Step'' denotes the training step required to match the final PPL of the Standard MLP. SGN achieves superior results with zero initial degradation and acts as a highly efficient adapter.}
    \vspace{0.2cm}
    \resizebox{1.0\columnwidth}{!}{
    \setlength{\tabcolsep}{6pt}
    \begin{tabular}{l l c c c c c}
        \toprule
        \rowcolor{gray!10}
        \textbf{Model Config} & \textbf{Tuning Strategy} & \textbf{Trainable Params} & \textbf{Init Shock (PPL)} & \textbf{Final PPL $\downarrow$} & \textbf{Convergence Step} & \textbf{Speedup} \\
        \midrule
        Standard MLP & Full Fine-tune & 100\% (124M) & 29.45 (Reference) & 19.24 & 20k (Baseline) & 1.0$\times$ \\
        \midrule
        \textit{Naive SGN Init} & Full Fine-tune & 109\% (135M) & 582.4 \textcolor{red}{(Broken)} & 20.15 & >30k & \textcolor{red}{Slow} \\
        \rowcolor{sota-blue}
        \textbf{SGN Retrofit (Ours)} & \textbf{Full Fine-tune} & \textbf{109\% (135M)} & \textbf{29.46 (Smooth)} & \textbf{18.41} & \textbf{12k} & \textbf{1.6$\times$} \\
        \midrule
        \rowcolor{green!5}
        SGN Adapter & Spectral-Only & \textbf{9.2\% (11M)} & 29.46 (Smooth) & 19.05 & 18k & 1.1$\times$ \\
        \bottomrule
    \end{tabular}
    }
    \label{tab:warm_start_detailed}
\end{table}

\textbf{Analysis of Results.}

\textbf{1. Verification of Homotopy Consistency (Stability).}
As evidenced by the ``Init Shock'' column in Table~\ref{tab:warm_start_detailed}, the \textit{Naive SGN Init} causes a catastrophic spike in perplexity (29.45 $\rightarrow$ 582.4), effectively destroying the pre-trained feature space and delaying convergence. In contrast, our proposed initialization results in a PPL of 29.46, statistically indistinguishable from the baseline. This empirically confirms that SGN can be seamlessly plugged into mature backbones without disrupting the optimization trajectory.

\textbf{2. Accelerated Convergence via Spectral Residuals.}
The \textbf{SGN Retrofit} setting not only achieves a lower final PPL (18.41 vs. 19.24) but also converges significantly faster. It reaches the baseline's final performance at just 12k steps, offering a \textbf{1.6$\times$ training speedup} in terms of iterations. We attribute this to the spectral branch's ability to quickly lock onto high-frequency residual errors that the base MLP struggles to minimize, effectively acting as a ``fast-lane" for optimization.

\textbf{3. SGN as a High-Efficiency Spectral Adapter.}
Perhaps the most striking result is the \textbf{SGN Adapter} setting (Green row). By modifying only $\sim$9\% of the parameters (freezing the vast majority of the network), the model still achieves a PPL of 19.05, surpassing the fully fine-tuned Standard MLP (19.24). This suggests that the "spectral deficiency" of pre-trained LLMs can be remedied by simply appending and training a lightweight spectral gate, without the need for expensive full-parameter retraining. This positions SGN as a promising candidate for modular, parameter-efficient adaptation (PEFT) in large-scale foundation models.

\subsection{Ablation Study on Base Activation Functions}
\label{app:base_activation_ablation}

While we selected GELU as the default base activation following recent trends in KAN-like architectures (e.g., Kolmogorov-Arnold Transformer), it is crucial to verify that SGN's effectiveness is not dependent on this specific choice. To demonstrate the generality of our approach, we conducted an ablation study on the MNIST dataset using a single-layer SGN model (64 hidden units, \texttt{num\_grids=9}). We replaced the base activation function $\phi(u)$ while keeping the spectral gating mechanism identical.

\begin{table}[h]
    \centering
    \begin{minipage}{0.55\textwidth}
        \centering
        \resizebox{0.95\linewidth}{!}{%
        \setlength{\tabcolsep}{10pt} 
        \begin{tabular}{l c}
            \toprule
            \rowcolor{gray!10} 
            \textbf{Base Activation} & \textbf{Top-1 Acc (\%)} \\
            \midrule
            \rowcolor{sota-blue} 
            \textbf{GELU-Fourier (Default)} & \textbf{97.60} \\
            SiLU/Swish-Fourier & 97.40 \\
            ReLU-Fourier & 97.40 \\
            SwishGLU-Fourier & 97.30 \\
            Tanh-Fourier & 97.20 \\
            \bottomrule
        \end{tabular}
        }
    \end{minipage}%
    \hfill
    \begin{minipage}{0.42\textwidth}
        \caption{Ablation study on base activation functions. Models were trained on MNIST with a single hidden layer (64 neurons) and a consistent spectral budget. Our default configuration (GELU) performs best, but the performance gap across different bases is minimal ($<0.4\%$).}
        \label{tab:activation-ablation}
    \end{minipage}
\end{table}

The results in Table~\ref{tab:activation-ablation} indicate that while the \textbf{GELU-Fourier} combination achieves the highest accuracy (97.60\%), SGN consistently delivers excellent performance ($>97.2\%$) regardless of the base activation used. This minimal variance confirms a key advantage of our design: the gated spectral branch effectively acts as a universal high-frequency compensator. By dynamically mixing the stable base with Fourier features, SGN reduces the dependency on the specific properties of the base nonlinearity, ensuring robustness across different architectural choices.

\section{Ablation Study: Necessity of the Gating Mechanism}
\label{app:gate_ablation}

To address the question of whether the learnable gating mechanism is essential or if a simple additive spectral branch would suffice, we conducted a controlled ablation study on the CIFAR-10 dataset. We compared three configurations under a strict parameter-matched budget:

\begin{itemize}
    \item \textbf{SGN (Fixed Gate):} The spectral branch is added with a fixed scalar coefficient ($\alpha=0.1$), i.e., $T(u) = \phi(u) + \alpha \Psi(u)$.
    \item \textbf{SGN (No-Gate / Additive):} The spectral branch is directly added to the base branch, i.e., $T(u) = \phi(u) + \Psi(u)$.
    \item \textbf{SGN (Full / Adaptive):} Our proposed formulation with input-dependent gating, i.e., $T(u) = \phi(u) + \mathcal{G}(u) \odot \Psi(u)$.
\end{itemize}

As shown in Table~\ref{tab:gate_ablation}, the \textbf{SGN (Full)} configuration significantly outperforms the static variants. The fixed and additive approaches yield lower accuracy (80.2\% and 80.5\%, respectively) compared to the gated formulation (81.5\%). This confirms that the gate's ability to \textit{adaptively} inject high-frequency components only where needed—rather than globally adding them—is crucial for the performance gains of SGN.

\begin{table}[h]
    \centering
    \caption{Ablation study on the necessity of the Gating Mechanism (CIFAR-10). All models are constrained to the same parameter budget.}
    \vspace{0.2cm} 
    \resizebox{0.85\columnwidth}{!}{ 
    \setlength{\tabcolsep}{8pt}
    \begin{tabular}{l c c c}
        \toprule
        \rowcolor{gray!10}
        \textbf{Model Configuration} & \textbf{Formulation} & \textbf{Params} & \textbf{Acc (\%)} \\
        \midrule
        SGN (Fixed Gate) & $\phi(u) + 0.1\Psi(u)$ & $1.1\times$ & 80.2 \\
        SGN (No-Gate)    & $\phi(u) + \Psi(u)$ & $1.1\times$ & 80.5 \\
        \rowcolor{sota-blue}
        \textbf{SGN (Full / Adaptive)} & $\phi(u) + \mathcal{G}(u) \odot \Psi(u)$ & \textbf{$1.1\times$} & \textbf{81.5} \\
        \bottomrule
    \end{tabular}
    }
    \label{tab:gate_ablation}
\end{table}

\subsection{Fitting experiment of sin(x) and cos(x)}

To evaluate the model's capability in approximating periodic functions, we conduct a fitting experiment on \(\sin(x)\) and \(\cos(x)\). Specifically, we train the model to learn the mapping \(x \mapsto \sin(x)\) and \(x \mapsto \cos(x)\) using a dataset of uniformly sampled points from the interval \([-20, 20]\). The training objective minimizes the mean squared error (MSE) between the predicted and true values.

We use a single-layer network with 64 neurons in the hidden layer and test SGN, KAN, MLP (RELU), and MLP (GELU). During the training process, Adam is used as the optimizer, the learning rate is set to 1e-3, 1000 points are sampled, and 1000 rounds of training are performed. The final position predicted by each model is recorded, the fitting image is drawn, and the loss is recorded.

Figure~\ref{fig:sin_cos} illustrates the fitting results of different models for \(\sin(x)\) and \(\cos(x)\). It can be observed that MLP\_RELU and MLP\_GELU struggle to maintain the periodic structure when the input range is large. While KAN performs relatively well in certain regions, it still exhibits significant deviations in the low-frequency range. In contrast, the SGN model more accurately captures the periodicity of the target functions and provides superior fitting performance across most regions.

Figure~\ref{fig:sin_cos_frequency} presents the frequency spectrum analysis of different models on \(\sin(x)\) and \(\cos(x)\). The true signal's spectral energy is primarily concentrated in the low-frequency region, and the spectral distribution of the SGN model closely matches the true signal, effectively preserving the spectral characteristics of the target function. On the other hand, MLP\_RELU and MLP\_GELU exhibit significant deviations in the high-frequency components, indicating their difficulty in accurately representing high-frequency features. Although KAN's spectral response aligns more closely with the true signal in some frequency bands, there are still noticeable discrepancies in energy distribution.

\begin{figure*}[h]
    \centering
    \includegraphics[width=1\textwidth, height=0.3\textheight]{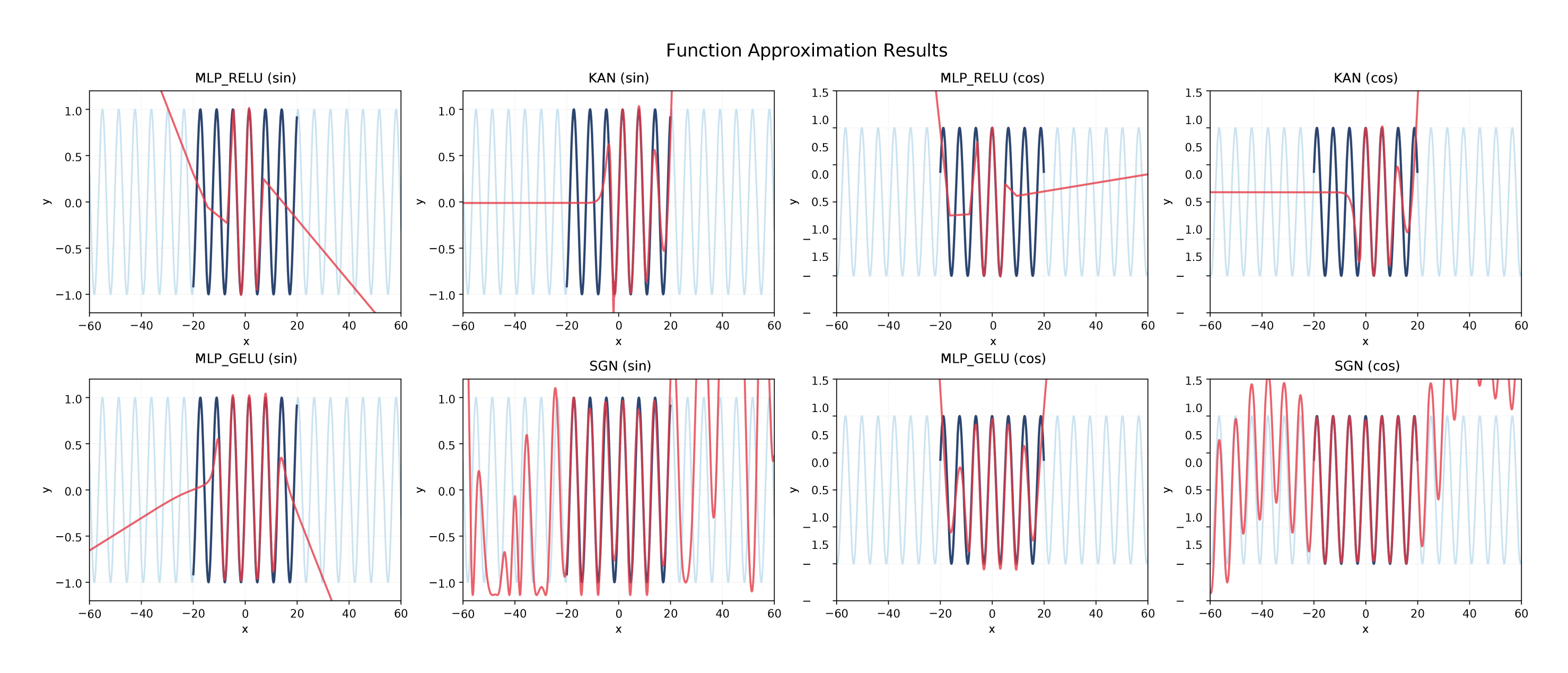} 
    \caption{Images of the four models fitted on the standard sin/cos function after training for 1000 epochs}
    \label{fig:sin_cos}
\end{figure*}

\begin{figure*}[h]
    \centering
    \includegraphics[width=1.\textwidth, height=0.3\textheight]{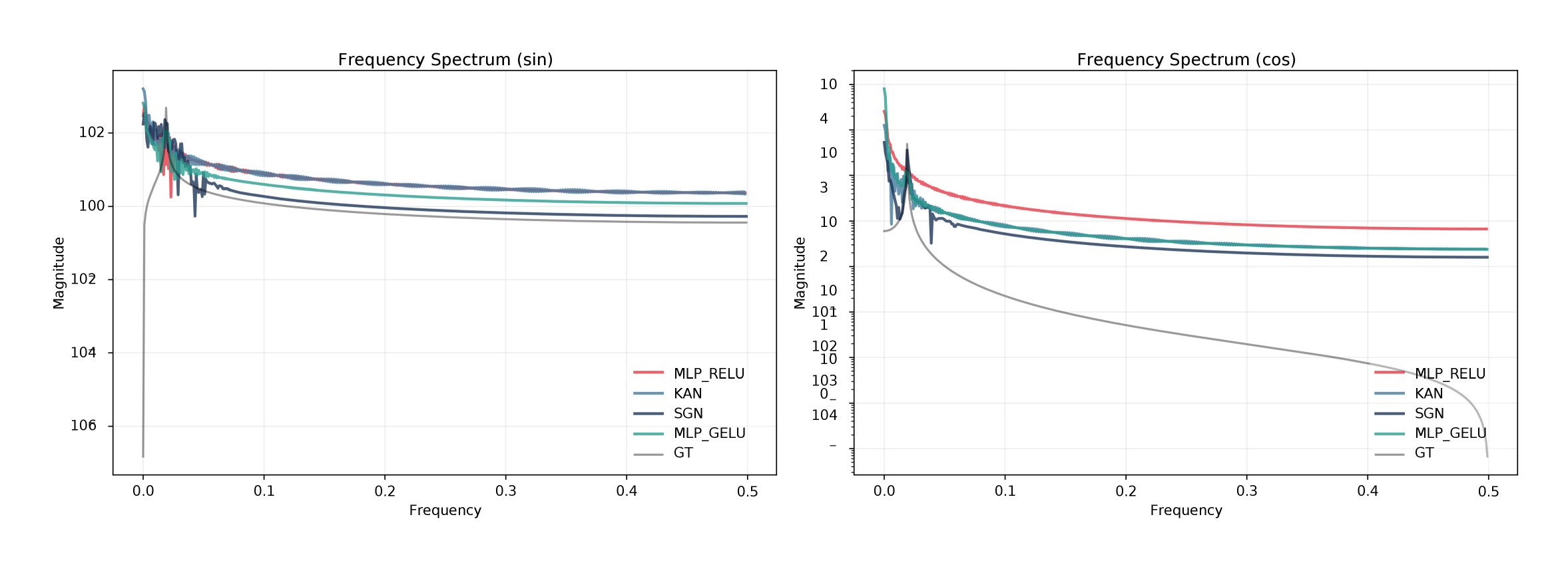} 
    \caption{Frequency spectrum analysis of different models for $sin(x)$ and $cos(x)$, showing the magnitude distribution across different frequency components.}
    \label{fig:sin_cos_frequency}
\end{figure*}

\newpage
\section{Proof of Theorem~\ref{thm:homotopy}: Homotopy Consistency \& Cold-Start Stability}
\label{sec:proof_thm_homotopy}

\paragraph{Recap of SGN operator.}
Recall that SGN replaces the element-wise activation operator in a standard FFN by
\begin{equation}
T_{\textsc{SGN}}(u) \;=\; \phi(u) \;+\; G(u)\odot \Psi_{\text{spec}}(u),
\qquad
\Psi_{\text{spec}}(u)=\gamma(u)A_r,
\label{eq:sgn_operator_recap_app}
\end{equation}
where $\phi(\cdot)$ is the base activation (e.g., GELU), $G(u)$ is the sigmoid gate defined in Eq.~(3),
and $\gamma(u)$ is the (trainable) Random Fourier Feature map defined in Eq.~(4).
The FFN output is $y=W_2T_{\textsc{SGN}}(u)+b_2$, and $L$ denotes an arbitrary downstream loss.

\paragraph{Initialization assumptions (homotopy regime).}
Theorem~\ref{thm:homotopy} concerns an initialization where the spectral branch starts with negligible influence.
We make two standard and implementation-aligned assumptions:

\noindent\textbf{Assumption A (small spectral projection).}
Initialize the spectral projection with a small magnitude:
\begin{equation}
A_r = \varepsilon \widetilde{A}_r,
\qquad \varepsilon \ll 1,
\label{eq:Ar_small_init_app}
\end{equation}
where $\widetilde{A}_r$ is a random matrix with $\|\widetilde{A}_r\|_{\mathrm{op}}=O(1)$ in the scale convention
of the theorem.\footnote{For Gaussian $\widetilde{A}_r$, one may use a high-probability bound
$\|\widetilde{A}_r\|_{\mathrm{op}}\le C(\sqrt{d_{\mathrm{ff}}}+\sqrt{2m})$; this only changes constants and does not affect the
$\varepsilon$-order statements.}

\noindent\textbf{Assumption B (gate near closed at initialization).}
To obtain the claimed $O(\varepsilon)$-small gradients for \emph{all} spectral parameters including $A_r$,
we initialize the gate in its near-zero regime:
\begin{equation}
\|G(u)\|_\infty \le C_G \varepsilon
\quad \text{for all $u$ encountered at initialization (or within the initial training trajectory).}
\label{eq:gate_small_init_app}
\end{equation}
In practice, this is achieved by choosing a sufficiently negative bias $b_g$ (and small $w_g$), so that
$\sigma(\cdot)$ operates close to $0$, consistent with the intended continuation/homotopy behavior.

\paragraph{A useful bound on Fourier features.}
Because $\gamma(u)$ is formed by concatenating $\sin(\cdot)$ and $\cos(\cdot)$ terms with the standard scaling
$\sqrt{2/m}$, it is uniformly bounded for all $u$:
\begin{equation}
\|\gamma(u)\|_\infty \le \sqrt{2/m},
\qquad
\|\gamma(u)\|_2 \le 2,
\label{eq:gamma_bound_app}
\end{equation}
where the $\ell_2$ bound follows from having $2m$ entries each of magnitude at most $\sqrt{2/m}$.

\subsection{Step 1: Forward homotopy $T^{(0)}_{\textsc{SGN}}(u)=\phi(u)+O(\varepsilon)$}
\label{subsec:homotopy_step1}

We first bound the spectral branch magnitude:
\begin{equation}
\|\Psi_{\text{spec}}(u)\|_2
=
\|\gamma(u)A_r\|_2
\le
\|\gamma(u)\|_2\|A_r\|_{\mathrm{op}}
\overset{\eqref{eq:Ar_small_init_app},\eqref{eq:gamma_bound_app}}{\le}
2\cdot \varepsilon \|\widetilde{A}_r\|_{\mathrm{op}}
= O(\varepsilon).
\label{eq:psi_Oeps_app}
\end{equation}
Therefore, without using any gate assumption beyond $\|G(u)\|_\infty\le 1$,
\begin{equation}
\|G(u)\odot \Psi_{\text{spec}}(u)\|_2
\le
\|G(u)\|_\infty \|\Psi_{\text{spec}}(u)\|_2
=O(\varepsilon),
\end{equation}
which yields the homotopy closeness
\begin{equation}
T_{\textsc{SGN}}^{(0)}(u)=\phi(u)+O(\varepsilon).
\label{eq:T_close_phi_app}
\end{equation}
This matches the first statement of Eq.~\eqref{eq:homotopy_value} in the main text.

\paragraph{Stronger statement under gate-close initialization.}
If we further use Assumption~\eqref{eq:gate_small_init_app}, then
\[
\|G(u)\odot \Psi_{\text{spec}}(u)\|_2
\le \|G(u)\|_\infty\|\Psi_{\text{spec}}(u)\|_2
= O(\varepsilon)\cdot O(\varepsilon)=O(\varepsilon^2),
\]
i.e., the perturbation is even smaller. We keep the theorem-aligned $O(\varepsilon)$ statement in
Eq.~\eqref{eq:T_close_phi_app} and view $O(\varepsilon^2)$ as an additional refinement.

\subsection{Step 2: Base-parameter gradients are preserved up to $O(\varepsilon)$}
\label{subsec:homotopy_step2}

Let $\theta_{\text{base}}$ denote the original FFN parameters (e.g., $W_1,b_1,W_2,b_2$) and any parameters that already
exist in the base pathway. We show that SGN induces only an $O(\varepsilon)$ perturbation to the Jacobian
with respect to $u$ at initialization, which implies the claimed base-gradient closeness.

\paragraph{Jacobian decomposition.}
Differentiating Eq.~\eqref{eq:sgn_operator_recap_app} by the product rule gives
\begin{equation}
J_{T_{\textsc{SGN}}}(u)
=
\mathrm{diag}(\phi'(u))
+
\mathrm{diag}(G(u))\,J_{\Psi_{\text{spec}}}(u)
+
\mathrm{diag}(\Psi_{\text{spec}}(u))\,J_{G}(u).
\label{eq:jacobian_decomp_proof_app}
\end{equation}
We bound the two extra terms.

\paragraph{(i) Bounding $J_{\Psi_{\text{spec}}}(u)$.}
Since $\Psi_{\text{spec}}(u)=\gamma(u)A_r$, we have
\[
J_{\Psi_{\text{spec}}}(u)=J_{\gamma}(u)\,A_r.
\]
The Jacobian $J_{\gamma}(u)$ is bounded at initialization because each coordinate is a $\sin/\cos$ of an affine map
$W_r^\top u+b_r$, whose derivative is bounded by the corresponding frequency vectors.
Thus, there exists a constant $C_\gamma'$ (depending on $\|W_r\|_{\mathrm{op}}$ at init) such that
\[
\|J_{\gamma}(u)\|_{\mathrm{op}}\le C_\gamma'.
\]
Combining with $A_r=\varepsilon\widetilde A_r$ gives
\begin{equation}
\|J_{\Psi_{\text{spec}}}(u)\|_{\mathrm{op}}
\le
\|J_{\gamma}(u)\|_{\mathrm{op}}\|A_r\|_{\mathrm{op}}
\le
C_\gamma'\cdot \varepsilon \|\widetilde A_r\|_{\mathrm{op}}
=
O(\varepsilon).
\label{eq:Jpsi_Oeps_app}
\end{equation}
Therefore, using only $\|G(u)\|_\infty\le 1$,
\begin{equation}
\big\|\mathrm{diag}(G(u))\,J_{\Psi_{\text{spec}}}(u)\big\|_{\mathrm{op}}
\le
\|G(u)\|_\infty\|J_{\Psi_{\text{spec}}}(u)\|_{\mathrm{op}}
=
O(\varepsilon).
\label{eq:term1_Oeps_app}
\end{equation}

\paragraph{(ii) Bounding $\mathrm{diag}(\Psi_{\text{spec}}(u))J_G(u)$.}
From Eq.~\eqref{eq:psi_Oeps_app}, $\|\Psi_{\text{spec}}(u)\|_\infty=O(\varepsilon)$, hence
$\|\mathrm{diag}(\Psi_{\text{spec}}(u))\|_{\mathrm{op}}=O(\varepsilon)$.
Moreover, $J_G(u)$ is bounded at initialization: sigmoid has derivative $\le 1/4$, and LayerNorm is Lipschitz
on trajectories where the per-token variance is bounded away from zero (as ensured by the standard LN $\epsilon$).
Thus there exists $C_G'>0$ such that $\|J_G(u)\|_{\mathrm{op}}\le C_G'$, yielding
\begin{equation}
\big\|\mathrm{diag}(\Psi_{\text{spec}}(u))\,J_{G}(u)\big\|_{\mathrm{op}}
=
O(\varepsilon).
\label{eq:term2_Oeps_app}
\end{equation}

\paragraph{Putting the bounds together.}
Substituting Eqs.~\eqref{eq:term1_Oeps_app} and \eqref{eq:term2_Oeps_app} into
Eq.~\eqref{eq:jacobian_decomp_proof_app}, we obtain
\begin{equation}
J_{T_{\textsc{SGN}}}(u)
=
\mathrm{diag}(\phi'(u)) + O(\varepsilon).
\label{eq:jacobian_close_app}
\end{equation}
Since $y=W_2T_{\textsc{SGN}}(u)+b_2$, the backpropagated gradient to $u$ is
\[
\nabla_u L_{\textsc{SGN}}
=
J_{T_{\textsc{SGN}}}(u)^\top W_2^\top \nabla_y L
=
\mathrm{diag}(\phi'(u))^\top W_2^\top \nabla_y L + O(\varepsilon),
\]
which equals the corresponding $\nabla_u L_{\textsc{MLP}}$ plus an $O(\varepsilon)$ perturbation.
By the chain rule, every base parameter gradient (through $u=W_1x+b_1$ and through $W_2,b_2$) inherits the same order,
proving
\begin{equation}
\nabla_{\theta_{\text{base}}}L_{\textsc{SGN}}^{(0)}
=
\nabla_{\theta_{\text{base}}}L_{\textsc{MLP}}^{(0)} + O(\varepsilon).
\label{eq:base_grad_close_app}
\end{equation}

\subsection{Step 3: Spectral-parameter gradients are $O(\varepsilon)$ at initialization}
\label{subsec:homotopy_step3}

Recall $\theta_{\text{spec}}=\{W_r,b_r,A_r,w_g,b_g\}$ as in Theorem~\ref{thm:homotopy}.
We show that each block has $O(\varepsilon)$ gradient at initialization; here Assumption~\eqref{eq:gate_small_init_app}
plays a crucial role (in particular for $A_r$).

\paragraph{(a) Gate parameters $(w_g,b_g)$.}
The gate affects $L$ only through the product $G(u)\odot \Psi_{\text{spec}}(u)$, hence
\[
\nabla_{(w_g,b_g)}L
\;\propto\;
\left(\nabla_G L\right)\cdot \frac{\partial G}{\partial (w_g,b_g)},
\qquad
\nabla_G L = \nabla_T L \odot \Psi_{\text{spec}}(u).
\]
Using $\|\Psi_{\text{spec}}(u)\|=O(\varepsilon)$ from Eq.~\eqref{eq:psi_Oeps_app} and bounded
$\left\|\frac{\partial G}{\partial (w_g,b_g)}\right\|$ (sigmoid/LN), we obtain
\[
\nabla_{(w_g,b_g)}L = O(\varepsilon).
\]

\paragraph{(b) Frequency/phase parameters $(W_r,b_r)$.}
These parameters influence $L$ only through $\gamma(u)$ and thus through $\Psi_{\text{spec}}(u)=\gamma(u)A_r$.
By the chain rule, the derivative $\partial \Psi_{\text{spec}}/\partial(W_r,b_r)$ contains a right-multiplication by $A_r$,
and therefore inherits a factor $\|A_r\|=O(\varepsilon)$ (Eq.~\eqref{eq:Ar_small_init_app}).
All other factors are bounded ($\sin/\cos$ derivatives and LN/sigmoid), so
\[
\nabla_{(W_r,b_r)}L = O(\varepsilon).
\]

\paragraph{(c) Spectral projection $A_r$.}
We have $\Psi_{\text{spec}}(u)=\gamma(u)A_r$ and
\[
\frac{\partial T_{\textsc{SGN}}(u)}{\partial A_r}
=
\mathrm{diag}(G(u))\,\gamma(u).
\]
Hence
\[
\|\nabla_{A_r}L\|
\le
\left\|\mathrm{diag}(G(u))\,\gamma(u)\right\|\cdot \|\nabla_T L\|
\le
\|G(u)\|_\infty \|\gamma(u)\|\cdot \|\nabla_T L\|.
\]
Using Assumption~\eqref{eq:gate_small_init_app} and $\|\gamma(u)\|_2\le 2$ from Eq.~\eqref{eq:gamma_bound_app},
we conclude $\nabla_{A_r}L=O(\varepsilon)$.

\paragraph{Conclusion.}
Combining (a)--(c), we obtain
\begin{equation}
\|\nabla_{\theta_{\text{spec}}}L_{\textsc{SGN}}^{(0)}\| = O(\varepsilon),
\label{eq:spec_grad_small_app}
\end{equation}
which completes the proof of Eq.~\eqref{eq:homotopy_value} and thus Theorem~\ref{thm:homotopy}.
\qed

\subsection{Proposition~\ref{prop:spectral_basis_expansion}: A Verifiable Form of ``Spectral Bandwidth Expansion''}
\label{subsec:prop_spectral_basis}

\paragraph{From ``bandwidth'' intuition to a checkable statement.}
The phrase \emph{spectral bandwidth expansion} can be interpreted in multiple ways (e.g., via an estimator of spectral density).
To avoid ambiguity in a theorem-style claim, we formalize a strictly verifiable and model-intrinsic fact:
\emph{SGN explicitly parameterizes sinusoidal basis functions at a discrete set of frequencies (the columns of $W_r$).}
Consequently, the induced function class necessarily contains the linear span of those sinusoidal bases, giving a rigorous
lower bound on the representable frequency set.

\begin{proposition}[Spectral Basis Containment (Rigorous Bandwidth Lower Bound)]
\label{prop:spectral_basis_expansion}
Let $\gamma(u)$ be the Fourier feature map used in SGN with spectral budget $m$:
\[
\gamma(u)=\Big[\cos(\omega_1^\top u + b_1),\ldots,\cos(\omega_m^\top u + b_m),
\sin(\omega_1^\top u + b_1),\ldots,\sin(\omega_m^\top u + b_m)\Big],
\]
where $\omega_j$ denotes the $j$-th column of $W_r$ and $b_j$ the corresponding phase.
Consider the SGN activation operator
\[
T_{\textsc{SGN}}(u)=\phi(u) + G(u)\odot(\gamma(u)A_r),
\]
with base activation $\phi(\cdot)$, gate $G(\cdot)$, and trainable projection $A_r$.
Then, for each output channel (coordinate) $k\in\{1,\ldots,d_{\mathrm{ff}}\}$,
the scalar function class induced by the $k$-th coordinate of $T_{\textsc{SGN}}(u)$ contains the subspace
\[
\mathcal{S}_{\Omega}
:=\Big\{\ \phi(u)+\sum_{j=1}^{m}\alpha_j\cos(\omega_j^\top u+b_j)
+\sum_{j=1}^{m}\beta_j\sin(\omega_j^\top u+b_j)\ :\ \alpha_j,\beta_j\in\mathbb{R}\ \Big\}.
\]
Equivalently, SGN can realize additive sinusoidal components at every frequency $\omega_j$ (hence the representable
frequency set contains $\Omega:=\{\omega_j\}_{j=1}^m$).
\end{proposition}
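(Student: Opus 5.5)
The plan is a direct construction: for any prescribed coefficients $(\alpha_j,\beta_j)_{j=1}^m$ and any channel $k$, I exhibit a setting of the trainable SGN parameters under which the $k$-th coordinate of $T_{\textsc{SGN}}(u)$ equals $\phi(u)_k+\sum_j\alpha_j\cos(\omega_j^\top u+b_j)+\sum_j\beta_j\sin(\omega_j^\top u+b_j)$ for every $u$. Here $\phi(u)_k$ is the base activation applied to coordinate $k$, which is what $\phi(u)$ in $\mathcal{S}_\Omega$ means channel-wise. The frequencies $\omega_j$ and phases $b_j$ are held fixed at the given columns of $W_r$ and entries of $b_r$. The only parameters I set are the gate parameters $(w_g,b_g)$ and the $k$-th column of $A_r$. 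This makes the statement a pure realizability (containment) claim, and no approximation argument is needed.

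\textbf{Step 1: neutralize the input dependence of the gate.} The gate is $G(u)=\sigma(w_g\odot\mathrm{LN}(u)+b_g)$ (Eq.~\eqref{eq:gate}). I set $(w_g)_k=0$, so that $G(u)_k=\sigma((b_g)_k)=:c_k$ is a constant in $(0,1)$, independent of $u$. The LayerNorm factor drops out completely because it is multiplied by zero, so its Lipschitz or variance behaviour never enters. Any value of $(b_g)_k$ works; for definiteness I take $(b_g)_k=0$, giving $c_k=1/2$.

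\textbf{Step 2: solve for the projection column.} With the gate constant, the $k$-th coordinate of $G(u)\odot(\gamma(u)A_r)$ is $c_k\sum_{i=1}^{2m}\gamma_i(u)(A_r)_{ik}$. If the main-text map of Eq.~\eqref{eq:rff_map} is used, I also account for the normalization $\sqrt{2/m}$, so the coordinate reads $c_k\sqrt{2/m}\,\sum_{i=1}^{2m}\tilde\gamma_i(u)(A_r)_{ik}$, where $\tilde\gamma_i$ are the unnormalized cosines and sines. I then set
\[
(A_r)_{jk}=\frac{\alpha_j}{c_k\sqrt{2/m}},\qquad (A_r)_{m+j,k}=\frac{\beta_j}{c_k\sqrt{2/m}},\qquad j=1,\dots,m.
\]
This choice is well defined because $c_k>0$ and $\sqrt{2/m}>0$. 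The ordering of cosines before sines follows the definition of $\gamma$ in the proposition; the $\oplus$ ordering of Eq.~\eqref{eq:rff_map} gives the same arrangement. Substituting back yields exactly the claimed element of $\mathcal{S}_\Omega$. Since $(\alpha_j,\beta_j)$ were arbitrary, every element of $\mathcal{S}_\Omega$ is attained, which proves the containment. The construction for channel $k$ touches only the $k$-th column of $A_r$ and the $k$-th entries of $w_g$ and $b_g$. Consequently all channels can be assigned their own coefficient vectors simultaneously and independently, which also gives the joint (vector-valued) version of the statement.

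\textbf{Main obstacle.} The only genuine subtlety is the gate, which is both input-dependent and confined to the open interval $(0,1)$. A naive attempt to drive it to $1$ fails, because the sigmoid never reaches $1$ exactly. The remedy is the observation in Step 1 that the gate does not need to equal $1$. It only needs to be a nonzero constant, and that constant is then absorbed into the rescaling of $A_r$ in Step 2. I would also remark that the conclusion is purely algebraic: it asserts realizability of the function class and makes no claim about which coefficients gradient training will actually find.
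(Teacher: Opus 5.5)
Your proof is correct and uses essentially the same construction as the paper: set $w_g=0$ so the gate becomes a constant, then choose the $k$-th column of $A_r$ to put the prescribed coefficients on the cosine and sine features. The one difference favors you: the paper takes $b_g$ large so that $G(u)\approx\mathbf{1}$, which gives the containment only approximately on bounded sets (as its footnote admits), whereas you absorb the constant $\sigma((b_g)_k)$ into $A_r$ and so obtain the exact containment for all $u$ that the statement actually asserts.
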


\begin{proof}
We prove containment by explicit parameter construction.
Set the gate to an (approximately) constant open state. For instance, choose $w_g=0$ and take $b_g$ sufficiently large
so that $G(u)\approx \mathbf{1}$ elementwise over the input range of interest.\footnote{Since $G(u)$ is a sigmoid,
it cannot equal $1$ exactly for finite parameters, but it can be made arbitrarily close on any bounded set.}
Then
\[
T_{\textsc{SGN}}(u)=\phi(u)+\gamma(u)A_r.
\]
Now fix any coefficients $\{\alpha_j,\beta_j\}_{j=1}^m$. Because $\gamma(u)$ concatenates the $m$ cosine and $m$ sine bases,
we can choose the $k$-th column of $A_r$ so that the linear combination $\gamma(u)A_r[:,k]$ assigns exactly those coefficients
to each basis term. Therefore, the $k$-th coordinate of $T_{\textsc{SGN}}(u)$ can represent any element of $\mathcal{S}_\Omega$,
proving the claim.
\end{proof}

\begin{remark}[Why this matches the ``bandwidth expansion'' narrative]
\label{rem:bandwidth_expansion_intuition_app}
Proposition~\ref{prop:spectral_basis_expansion} gives a strict \emph{lower bound}: SGN can always inject components at
frequencies in $\Omega$ regardless of the base activation.
When $G(u)$ is input-dependent (as in the default SGN), the term $G(u)\odot(\gamma(u)A_r)$ acts as an amplitude modulation
of these sinusoidal bases, which can further reshape the effective spectrum. Importantly, our formal claim does not rely on any
estimator-dependent language (e.g., ``KDE in the frequency domain''); it is the verifiable containment $\Omega\subseteq\mathcal{F}_{\textsc{SGN}}$.
\end{remark}

\section{Derivation of Theorem~\ref{thm:complexity}: Linear (MLP-Level) Parameter \& FLOPs Complexity}
\label{sec:proof_thm_complexity}

\paragraph{Overview.}
This section provides a formula-level derivation supporting Theorem~\ref{thm:complexity} in the main text.
The key claim is that SGN introduces only a \emph{linear} overhead in the FFN width (for fixed spectral budget $m$),
and crucially removes any dependence on spline grid resolution $G$ (which drives both parameter and latency growth in
spline-table designs such as KANs). We present the derivation in a ``baseline + overhead'' form, so it is explicit that the
dominant computation remains dense GEMM, i.e., the same hardware-friendly structure as standard FFNs/MLPs.

\paragraph{Notation.}
We use:
\begin{itemize}
    \item $d_{\mathrm{model}}$: Transformer embedding dimension.
    \item $d_{\mathrm{ff}}$: FFN hidden width (activation space), i.e., $u\in\mathbb{R}^{d_{\mathrm{ff}}}$.
    \item $m$: fixed spectral budget (number of frequencies). The Fourier feature dimension is $2m$ (sine + cosine).
    \item $G$: spline grid resolution used by spline-based methods (e.g., KAN).
\end{itemize}
For FLOPs, a multiply-add counts as $2$ FLOPs. Elementwise operations (LayerNorm, sigmoid, $\sin/\cos$, Hadamard products)
scale as $O(d_{\mathrm{ff}})$ or $O(m)$ and are lower-order compared to dense matrix multiplications.

\subsection{Parameter Count: Exact Overhead of SGN}
\label{subsec:param_count_sgn_app}

\paragraph{Baseline: standard FFN parameters.}
A standard Transformer FFN layer has two dense projections:
\[
u = W_1x+b_1,\qquad y=W_2\phi(u)+b_2,
\]
with $W_1\in\mathbb{R}^{d_{\mathrm{ff}}\times d_{\mathrm{model}}}$, $W_2\in\mathbb{R}^{d_{\mathrm{model}}\times d_{\mathrm{ff}}}$.
Thus the baseline parameter count is
\begin{equation}
\mathrm{Params}_{\textsc{FFN}}
=
(d_{\mathrm{ff}}d_{\mathrm{model}} + d_{\mathrm{ff}})
+
(d_{\mathrm{model}}d_{\mathrm{ff}} + d_{\mathrm{model}})
=
2d_{\mathrm{model}}d_{\mathrm{ff}} + d_{\mathrm{ff}} + d_{\mathrm{model}}.
\label{eq:params_ffn_baseline_app}
\end{equation}
SGN is designed as a \emph{drop-in reparameterization of the activation operator}:
it does \emph{not} change $W_1,W_2,b_1,b_2$. Therefore, Theorem~\ref{thm:complexity} focuses on the additional parameters
introduced by the SGN activation.

\paragraph{SGN overhead components.}
From Eq.~(3)--(5) in the main text, the learnable components introduced by SGN are:
\begin{itemize}
    \item Trainable RFF projection: $W_r\in\mathbb{R}^{d_{\mathrm{ff}}\times m}$ and $b_r\in\mathbb{R}^m$,
    contributing $(d_{\mathrm{ff}}+1)m$ parameters.
    \item Spectral mixing/projection: $A_r\in\mathbb{R}^{2m\times d_{\mathrm{ff}}}$, contributing $2md_{\mathrm{ff}}$ parameters.
    \item Channel-wise gate: $(w_g,b_g)\in\mathbb{R}^{d_{\mathrm{ff}}}\times\mathbb{R}^{d_{\mathrm{ff}}}$, contributing $2d_{\mathrm{ff}}$ parameters.
    \item (Optional) LayerNorm affine $(\gamma,\beta)\in\mathbb{R}^{d_{\mathrm{ff}}}\times\mathbb{R}^{d_{\mathrm{ff}}}$,
    contributing an additional $2d_{\mathrm{ff}}$ if LN is affine.
\end{itemize}

\paragraph{Total overhead (Theorem statement).}
Therefore, the parameter overhead relative to the baseline FFN is
\begin{equation}
\Delta P_{\textsc{SGN}}
=
\underbrace{(d_{\mathrm{ff}}+1)m}_{W_r,b_r}
+
\underbrace{2md_{\mathrm{ff}}}_{A_r}
+
\underbrace{2d_{\mathrm{ff}}}_{\text{Gate}}
\;(+\; \underbrace{2d_{\mathrm{ff}}}_{\text{(optional LN affine)}}),
\label{eq:param_overhead_sgn_app}
\end{equation}
which matches Eq.~(8) in the main text up to whether LN affine parameters are counted.
In big-$O$ form,
\[
\Delta P_{\textsc{SGN}} = O(d_{\mathrm{ff}}m) + O(d_{\mathrm{ff}}),
\]
and is \textbf{independent of any grid resolution} $G$.

\subsection{FLOPs: Per-Token Compute Preserves GEMM-Friendly Structure}
\label{subsec:flops_sgn_app}

\paragraph{Baseline FFN FLOPs.}
Per token, the two dense projections cost (multiply-add $=2$ FLOPs):
\begin{equation}
\mathrm{FLOPs}_{\textsc{FFN}}
\approx
2d_{\mathrm{model}}d_{\mathrm{ff}}
+
2d_{\mathrm{model}}d_{\mathrm{ff}}
=
4d_{\mathrm{model}}d_{\mathrm{ff}},
\label{eq:flops_ffn_baseline_app}
\end{equation}
ignoring the lower-order cost of $\phi(\cdot)$.

\paragraph{SGN overhead FLOPs (activation-side only).}
SGN adds three GEMM-like computations plus elementwise ops:
\begin{itemize}
    \item RFF projection $W_r^\top u+b_r$: \quad $\approx 2d_{\mathrm{ff}}m$ FLOPs.
    \item Trig expansion to form $\sin/\cos$: \quad $O(m)$ FLOPs (lower-order).
    \item Spectral mixing $\gamma(u)A_r$ with $\gamma(u)\in\mathbb{R}^{2m}$: \quad $\approx 2(2m)d_{\mathrm{ff}}=4d_{\mathrm{ff}}m$ FLOPs.
    \item Gate (LN + affine + sigmoid) and Hadamard injection: \quad $O(d_{\mathrm{ff}})$ FLOPs.
\end{itemize}
Thus, the SGN activation overhead is
\begin{equation}
\Delta \mathrm{FLOPs}_{\textsc{SGN}}
\approx
(2d_{\mathrm{ff}}m) + (4d_{\mathrm{ff}}m) + O(d_{\mathrm{ff}}+m)
=
6d_{\mathrm{ff}}m + O(d_{\mathrm{ff}}).
\label{eq:flops_overhead_sgn_app}
\end{equation}
For the typical regime $m\ll d_{\mathrm{ff}}$ (fixed small spectral budget), this is linear in $d_{\mathrm{ff}}$ and remains
a small overhead compared to the baseline FFN GEMMs in Eq.~\eqref{eq:flops_ffn_baseline_app}. Importantly, all major
terms are dense matrix multiplications, so SGN preserves the same accelerator-friendly compute profile as standard FFNs.

\subsection{Comparison to Spline-Based KAN: The Resolution--Efficiency Bottleneck}
\label{subsec:compare_kan_app}

\paragraph{KAN dependence on grid resolution $G$.}
In spline-based KAN parameterizations, each edge function is represented by spline coefficients on a grid.
With grid resolution $G$ (and spline order $K$), an edge typically stores $\Theta(G+K)$ control points.
Hence the parameter count scales as
\[
\mathrm{Params}_{\textsc{KAN}} = \Theta(d_{\mathrm{in}}d_{\mathrm{out}}(G+K)),
\]
and spline evaluation involves interpolation / table lookup, which is comparatively memory-bound and scales with $G$.

\paragraph{SGN removes the $G$ dependence.}
SGN replaces spline-table evaluation by global Fourier bases with a \emph{fixed} budget $m$ and dense projections.
As shown in Eq.~\eqref{eq:param_overhead_sgn_app} and Eq.~\eqref{eq:flops_overhead_sgn_app}, the overhead depends on
$(d_{\mathrm{ff}},m)$ but not on $G$. This is precisely the sense in which SGN resolves the resolution--efficiency bottleneck.

\label{subsec:table_complexity_app}

\begin{table}[t]
\centering
\small
\renewcommand{\arraystretch}{1.25}
\caption{Detailed complexity summary for Theorem~\ref{thm:complexity}. We report the \emph{exact} SGN overhead and the
dominant FLOPs terms per token (multiply-add $=2$ FLOPs).}
\label{tab:complexity_detailed}
\begin{tabular}{l c c}
\toprule
\textbf{Model} & \textbf{Parameter Dependence} & \textbf{Dominant Compute Structure} \\
\midrule
\textbf{Standard FFN} &
$2d_{\mathrm{model}}d_{\mathrm{ff}} + d_{\mathrm{ff}} + d_{\mathrm{model}}$ &
Dense GEMM (two projections) \\

\midrule
\textbf{KAN (B-spline)} &
$\Theta(d_{\mathrm{in}}d_{\mathrm{out}}(G+K))$ \;\;(\textbf{depends on $G$}) &
Spline-table eval (memory-bound, grid-dependent) \\

\midrule
\textbf{SGN (Ours)} &
\textbf{Overhead:} $(d_{\mathrm{ff}}+1)m + 2md_{\mathrm{ff}} + 2d_{\mathrm{ff}} \;(+2d_{\mathrm{ff}})$ &
Dense GEMM + elementwise (all $G$-independent) \\
& $= O(d_{\mathrm{ff}}m)+O(d_{\mathrm{ff}})$ &
(RFF proj + spectral mix + gate) \\

\bottomrule
\end{tabular}
\end{table}